\documentclass{article}
\usepackage[utf8]{inputenc}

\usepackage{amsthm, amssymb, amsmath,bm}

\usepackage{algorithm} 
\usepackage{algorithmic}
\usepackage{graphicx}
\usepackage{subcaption}
\usepackage{dsfont}
\usepackage{flexisym}
\usepackage{natbib}
\usepackage{cite}
\usepackage{hyperref}
\usepackage{xcolor}
\usepackage{bbm}
\usepackage{geometry}
 \geometry{
 a4paper,
 total={170mm,257mm},
 left=20mm,
 top=20mm,
 }

\theoremstyle{definition}

\newtheorem{theorem}{Theorem}[section]
\newtheorem{assumption}[theorem]{Assumption}

\newtheorem{definition}[theorem]{Definition}
\newtheorem{lemma}[theorem]{Lemma}
\newtheorem{corollary}[theorem]{Corollary}

\theoremstyle{remark}

\newtheorem{remark}{Remark}[section]

\newcommand{\mF}{\mathcal{F}}
\newcommand{\mK}{\mathcal{K}}
\newcommand{\mT}{\mathcal{T}}

\newcommand{\mC}{\mathcal{C}}
\newcommand{\mR}{\mathcal{R}}
\newcommand{\R}{\mathbb{R}}
\newcommand{\Prob}{\mathbb{P}}
\newcommand{\N}{\mathbb{N}}
\newcommand{\E}{\mathbb{E}}
\newcommand{\indfunc}{\mathbb{I}}

\def\E{\mathbb{E}}

\pdfinfo{
/Title (A Near-Optimal Change-Detection Based Algorithm for Piecewise-Stationary Combinatorial Semi-Bandits)
/Author (Huozhi Zhou, Lingda Wang, Lav R. Varshney, Ee-Peng Lim)
/Keywords (Semi-bandit, add more later)
}
%
\setcounter{secnumdepth}{2}
%
\title{A Near-Optimal Change-Detection Based Algorithm for Piecewise-Stationary Combinatorial Semi-Bandits}
\author{
Huozhi Zhou\textsuperscript{1}\thanks{The first two authors contributed equally to this work.},
Lingda Wang\textsuperscript{1}\footnotemark[1],
Lav R. Varshney\textsuperscript{1},
Ee-Peng Lim\textsuperscript{2}\\
\textsuperscript{1}{ECE Department, UIUC}\\
\textsuperscript{2}{School of Information Systems, SMU}\\
\{hzhou35, lingdaw2, varshney\}@illinois.edu,
eplim@smu.edu.sg}

\begin{document}
\maketitle
\begin{abstract}
\label{sec:abstract}
We investigate the piecewise-stationary combinatorial semi-bandit problem. Compared to the original combinatorial semi-bandit problem, our setting assumes the reward distributions of base arms may change in a piecewise-stationary manner at unknown time steps. We propose an algorithm, \texttt{GLR-CUCB}, which incorporates an efficient combinatorial semi-bandit algorithm, \texttt{CUCB}, with an almost parameter-free change-point detector, the \emph{Generalized Likelihood Ratio Test} (GLRT).  Our analysis shows that the regret of \texttt{GLR-CUCB} is upper bounded by $\mathcal{O}(\sqrt{NKT\log{T}})$, where $N$ is the number of piecewise-stationary segments, $K$ is the number of base arms, and $T$ is the number of time steps. As a complement, we also derive a nearly matching regret lower bound on the order of $\Omega(\sqrt{NKT}$), for both piecewise-stationary multi-armed bandits and combinatorial semi-bandits, using information-theoretic techniques and judiciously constructed piecewise-stationary bandit instances. Our lower bound is tighter than the best available regret lower bound, which is $\Omega(\sqrt{T})$. Numerical experiments on both synthetic and real-world datasets demonstrate the superiority of \texttt{GLR-CUCB} compared to other state-of-the-art algorithms.
\end{abstract}

\section{Introduction}
\label{sec:intro}
The multi-armed bandit (MAB) problem, first proposed by~\citet{thompson1933likelihood},  has been studied extensively in statistics and machine learning communities, as it models many online decision making problems such as online recommendation~\citep{li2016collaborative}, computational advertising~\citep{tang2013automatic}, and crowdsourcing task allocation~\citep{hassan2014multi}. The classical MAB is modeled as an agent repeatedly pulling one of $K$ arms and observing the reward generated, with the goal of minimizing the \textit{regret} which is the difference between the reward of the optimal arm in hindsight and the reward of the arm chosen by the agent. This classical problem is well understood for both stochastic~\citep{lai1985asymptotically} and adversarial settings~\citep{auer2002nonstochastic}. The stochastic setting is when the reward of each arm is generated from a fixed distribution, and it is well known that the problem-dependent lower bound is of order $\Omega(\log T)$~\citep{lai1985asymptotically}, where $T$ is the number of time steps. Several algorithms have been proposed and proven to achieve $\mathcal{O}(\log T)$ regret~\citep{agrawal2012analysis,auer2002finite}. The adversarial setting is when at each time step the environment generates the reward in an adversarial manner, whose minimax regret lower bound is of order $\Omega(\sqrt{T})$. Several algorithms achieving order-optimal (up to poly-logarithm factors) regret have been proposed in recent years~\citep{hazan2011better,bubeck2018sparsity,li2019bandit}. 

Many real-world applications, however, have a combinatorial nature that cannot be fully characterized by the classical MAB model. For example, online movie sites aim to recommend multiple movies to the users to maximize their utility under some constraints (e.g. recommend at most one movie for each category). This phenomenon motivates the study of combinatorial semi-bandits (CMAB), which aims to identify the best superarm, a set of base arms with highest aggregated reward. Several algorithms for stochastic CMAB with provable guarantees based on optimism principle have been proposed recently~\citep{chen2013combinatorial,kveton2015tight,combes2015combinatorial}. All of these algorithms use some oracle to overcome the curse of dimension of the action space for solving some combinatorial optimization problem at each iteration. In addition, these algorithms achieve the optimal regret upper bound which is $\mathcal{O}(C\log T)$, where $C$ is an instance-dependent parameter. Adversarial CMAB is also well studied. Several algorithms achieve optimal regret of order $\mathcal{O}(\sqrt{T})$ based on either \emph{Follow-the-Regularized-Leader} or \emph{Follow-the-Perturbed-Leader}, both of which are general frameworks for adversarial online learning algorithm design~\citep{hazan2016introduction}. Moreover, one recent study has developed an algorithm that is order-optimal for both stochastic and adversarial CMAB~\citep{zimmert2019beating}. 

Although both stochastic and adversarial CMAB are well-studied, understanding of the scenario lying in the ``middle'' of these two settings is still limited. Such a ``middle'' setting where the reward distributions of base arms slowly change over time may be a more realistic model in many applications. For instance, in online recommendation systems, users' preference are unlikely to be either time-invariant or to change significantly and frequently over time. Thus in this case it would be too ideal to assume the stochastic CMAB model and too conservative to assume the adversarial CMAB model. Similar situations appear in web search, online advertisement, and crowdsourcing~\citep{yu2009piecewise,pereira2018analyzing,vempaty2014reliable}. As such, we investigate a setting lying between these two standard CMAB models, namely piecewise-stationary combinatorial semi-bandit, which we will define formally in Section~\ref{sec:pf}. Piecewise-stationary CMAB is a natural generalization of the piecewise-stationary MAB model~\citep{hartland2007change,kocsis2006discounted,garivier2011upper}, and can be interpreted as an approximation to the slow-varying CMAB problem. Roughly, compared to the stochastic CMAB, we assume reward distributions of base arms remain fixed for certain time periods called \textit{piecewise-stationary segments}, but can change abruptly at some unknown time steps, called \textit{change-points}. 

Previous works on piecewise-stationary MAB may be divided into two categories: \textit{passively adaptive approach}~\citep{garivier2011upper,besbes2014stochastic,wei2018abruptly} and \textit{actively adaptive approach}~\citep{cao2019nearly,liu2018change,besson2019generalized,auer2019adaptively}. Passively adaptive approaches make decisions based on the most recent observations and are unaware of the underlying distribution changes. On the contrary, actively adaptive approaches incorporate a change-point detector subroutine to monitor the reward distributions, and restart the algorithm once a change-point is detected. Numerous empirical experiments have shown that actively adaptive approaches outperform passively adaptive approaches~\citep{mellor2013thompson}, which motivates us to adopt an actively adaptive approach.

Our main contributions include the following:
\begin{itemize}
    \item We propose a simple and general algorithm for piecewise-stationary CMAB, named \texttt{GLR-CUCB}, which is based on \texttt{CUCB}~\citep{chen2013combinatorial} with a novel change-point detector, the \emph{generalized likelihood ratio test} (GLRT) ~\citep{besson2019generalized}. The advantage of GLRT change-point detection is it is almost parameter-free and thus easy to tune compared to previously proposed change-point detection methods used in nonstationary MAB, such as \texttt{CUSUM}~\citep{liu2018change} and \texttt{SW}~\citep{cao2019nearly}.
    \item For any combinatorial action set, we derive the problem-dependent regret bound for \texttt{GLR-CUCB} under mild conditions (see Section~\ref{sec:reg_ub}). When the number of change points $N$ is known beforehand, the regret of \texttt{GLR-CUCB} is upper bounded by $\mathcal{O}(C_1NK^2\log{T}+C_2\sqrt{NKT\log{T}})$ (nearly order-optimal within poly-logarithm factor in $T$), where $K$ is number of base arms. When $N$ is unknown, the algorithm achieves $\mathcal{O}(C_1NK^2\log{T}+C_2N\sqrt{KT\log {T}})$. Here, $C_1$ and $C_2$ are problem-dependent constants which do not depend on $T$, $N$, or $K$.
    \item We derive a tighter minimax lower bound for both piecewise-stationary MAB and  piecewise-stationary CMAB on the order of $\Omega(\sqrt{NKT})$. Since piecewise-stationary MAB is a special instance of piecewise-stationary CMAB in which every superarm is a single arm, thus any minimax lower bound holds for piecewise-stationary MAB also holds for piecewise-stationary CMAB. To the best of our knowledge, this is the best existing minimax lower bound for piecewise-stationary CMAB. Previously, the best available lower bound is $\Omega(\sqrt{T})$~\citep{garivier2011upper}, which does not depend on $N$ or $K$.
    \item We demonstrate that \texttt{GLR-CUCB} performs significantly better than state-of-the-art algorithms through experiments on both synthetic and real-world datasets.
\end{itemize}

The remainder of this paper is organized as follows: the formal problem formulation and some preliminaries are introduced in Section~\ref{sec:pf}, then the proposed \texttt{GLR-CUCB} algorithm in Section~\ref{sec:alg}. We derive the upper bound on the regret of our algorithm in Section~\ref{sec:reg_ub}, and the minimax regret lower bound in Section~\ref{sec:reg_lb}. Section~\ref{sec:exp} gives our experiment results. Finally, we conclude the paper. Due to the page limitation, we postpone proofs and additional experimental results to the appendix.

\section{Problem Formulation and Background}
\label{sec:pf}
In this section, we start with the formal definition of piecewise-stationary combinatorial semi-bandit as well as some technical assumptions in Section~\ref{subsec:pf_bandit}. Then, we introduce the GLR change-point detector used in our algorithm design, and its advantage in Section~\ref{subsec:def_glr}.
\subsection{Piecewise-Stationary Combinatorial Semi-Bandits}
\label{subsec:pf_bandit}
A piecewise-stationary combinatorial semi-bandit is characterized by a tuple $(\mK,\mF,\mT,\{f_{k,t}\}_{k\in\mK,t\in\mT},r_{\bm{\mu}_t}\left(S_t\right))$. Here, $\mK=\{1,\ldots,K\}$ is the set of $K$ base arms; $\mF\subseteq 2^{\mK}$ is the set of all super arms; $\mT=\{1,\ldots,T\}$ is a sequence of $T$ time steps; $f_{k,t}$ is the reward distribution of arm $k$ at time $t$ with mean $\mu_{k,t}$ and bounded support within $[0,1]$; $r_{\bm{\mu}_t}\left(S_t\right):\mF\times[0,1]^K\mapsto\R$ is the expected reward function defined on the super arm $S_t$ and mean vector of all base arms $\bm{\mu}_t := [\mu_{1,t},\mu_{2,t},\ldots,\mu_{K,t}]^\top$ at time $t$. Like~\citet{chen2013combinatorial}, we assume the expected reward function $r_{\bm{\mu}}\left(S\right)$ satisfies the following two properties:
\begin{assumption}[Monotonicity]
\label{assump:mon}
Given two arbitrary mean vectors $\bm{\mu}$ and $\bm{\mu}'$, if $\mu_k\ge\mu'_k,\ \forall k\in\mK$, then $r_{\bm{\mu}}\left(S\right)\ge r_{\bm{\mu}'}\left(S\right)$.
\end{assumption}
\begin{assumption}[$L$-Lipschitz]
\label{assump:lip}
Given two arbitrary mean vectors $\bm{\mu}$ and $\bm{\mu}'$, there exists an $L<\infty$ such that $\left|r_{\bm{\mu}}\left(S\right)- r_{\bm{\mu}'}\left(S\right)\right|\le L\|\mathcal{P}_S(\bm{\mu}-\bm{\mu}')\|_2$, $\forall S\in\mF$, where $\mathcal{P}_S(\cdot)$ is the projection operator specified as $\mathcal{P}_S(\bm{\mu})=\left[\mu_1\indfunc\{1\in S\},\ldots,\mu_k\indfunc\{k\in S\},\ldots,\mu_K\indfunc\{K\in S\}\right]^\top$ in terms of the indicator function $\indfunc\{\cdot\}$.
\end{assumption}

In the piecewise i.i.d. model, we define $N$, the number of piecewise-stationary segments in the reward process, to be
\begin{equation*}
    N=1+\sum_{t=1}^{T-1}\indfunc{\{\exists k\in\mK~\mbox{s.t.}~f_{k,t}\neq f_{k,t+1}\}}.
\end{equation*}
 We denote these $N-1$ change-points as $\nu_1,\nu_2,\ldots,\nu_{N-1}$ respectively, and we let $\nu_0=0$ and $\nu_N=T$. For each piecewise-stationary segment $t\in[\nu_{i-1}+1,\nu_i]$, we use $f_{k}^i$ and $\mu_k^i$ to denote the reward distribution and the expected reward of arm $k$ on the $i$th piecewise-stationary segment, respectively. The vector encoding the expected rewards of all base arms at the $i$th segment is denoted as $\bm{\mu}^i=[\mu_1^i,\cdots,\mu_K^i]^\top$, $\forall i=1,\cdots,N$. Note that when a change-point occurs, there must be at least one arm whose reward distribution has changed, however, the rewards distributions of all base arms do not necessarily change.

For a piecewice-stationary combinatorial semi-bandit problem, at each time step $t$, the learning agent chooses  a super arm $S_t\in\mF$ to play based on the rewards observed up to time $t$. When the agent plays a super arm $S_t$, the reward $\{X_{I_t}\}_{I_t\in S_{t}}$ of base arms contained in super arm $S_t$ are revealed to the agent and the reward of super arm $R_t(S_{t})$ as well. We assume that the agent has access to an $\alpha$-approximation oracle, to carry out combinatorial optimization, defined as follows.
\begin{assumption}[$\alpha$-approximation oracle]
Given a mean vector $\bm{\mu}$, the $\alpha$-approximation oracle $\mbox{Oracle}_\alpha(\bm{\mu})$ outputs an $\alpha$-suboptimal super arm $S$ such that $r_{\bm{\mu}}\left(S\right)\ge\alpha\max_{S\in\mF}r_{\bm{\mu}}\left(S\right)$.
\end{assumption}
\begin{remark}
 The approximation oracle assumption was first proposed in~\citet{chen2013combinatorial} for the combinatorial semi-bandit setting. This assumption is reasonable since many combinatorial NP-hard problems admit approximation algorithms, which can be solved efficiently in polynomial time~\citep{ausiello1995approximate}. There are also many combinatorial problems which are not NP hard and can be solved efficiently. One example is the top-$m$ arm identification problem in the bandit setting~\citep{cao2015top}, where any efficient sorting algorithm suffices. 
\end{remark}
 As only an $\alpha$-approximation oracle is used for optimization, it is reasonable to use expected $\alpha$-approximation cumulative regret to measure the performance of the learning agent, defined as follows.
\begin{definition}[Expected $\alpha$-approximation cumulative regret]
 The agent's policy is evaluated by its expected $\alpha$-approximation cumulative regret,
 \begin{equation*}
    \mR(T)=\E\left[\alpha\sum_{t=1}^T\max_{S\in\mF}r_{\bm{\mu}_t}(S)-\sum_{t=1}^Tr_{\bm{\mu}_t}(S_{t})\right],
\end{equation*}
where the expectation $\E[\cdot]$ is taken with respect to the selection of $\{S_t|S_t\in\mathcal{F}\}$.
\end{definition}

\subsection{Generalized Likelihood Ratio Change-Point Detector for Sub-Bernoulli Distribution}
\label{subsec:def_glr}
Sequential change-point detection is a classical problem in statistical sequential analysis, but most existing works make additional assumptions on the pre-change and post-change distributions which might not hold in the bandit setting~\citep{siegmund2013sequential,basseville1993detection}. In general, designing algorithms with provable guarantees for change detection with little assumption on pre-change and post-change distributions is very challenging. In our algorithm design, we will use the generalized likelihood ratio (GLR) change-point detector~\citep{besson2019generalized}, which works for any sub-Bernoulli distribution. Compared to other existing change detection methods used in piecewise-stationary MAB, the GLR detector has less parameters to be tuned and needs less prior knowledge for the bandit instance. Specifically, GLR only needs to tune the threshold $\delta$ for the change-point detection, and does not require the smallest change in expectation among all change-points. On the contrary, CUSUM~\citep{liu2018change} and SW~\citep{cao2019nearly} both need more parameters to be tuned and need to know the smallest magnitude among all change-points beforehand, which limits their practicality. 

To define GLR change-point detector, we need some more definitions for clarity. A distribution $f$ is said to be sub-Bernoulli if $\E_{X\sim f}[e^{\lambda X}]\le e^{\phi_\mu(\lambda)}$, where $\mu=\E_{X\sim f}[X]$; $\phi_\mu(\lambda)=\log(1-\mu-\mu e^{\lambda})$ is the log moment generating function of a Bernoulli distribution with mean $\mu$. Notice that the support of reward distribution $f_{k,t}$, $\forall k\in\mK,\,t\in\mT$ is a subset of the interval $[0,1]$, thus all $\{f_{k,t}\}$ are sub-Bernoulli distributions with mean $\{\mu_{k,t}\}$, due to the following lemma. 
\begin{lemma}[Lemma 1 in~\citet{cappe2013kullback}]
Any distribution $f$ with bounded support within the interval $[0,1]$ is a sub-Bernoulli distribution that satisfies:
\begin{align*}
    \E_{X\sim f}[e^{\lambda X}]\le e^{\phi_\mu(\lambda)}.
\end{align*}
\end{lemma}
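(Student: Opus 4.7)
The plan is to reduce the claim to a one-line application of Jensen's inequality, using the convexity of the map $x\mapsto e^{\lambda x}$. Intuitively, the statement is a ``worst-case'' MGF comparison: among all distributions on $[0,1]$ with a prescribed mean $\mu$, the Bernoulli$(\mu)$ distribution maximises the moment generating function, so the lemma simply records this fact in the explicit form $\phi_\mu(\lambda)=\log((1-\mu)+\mu e^\lambda)$ associated with a Bernoulli law.

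The first and essentially only step is a pointwise bound. For any fixed $\lambda\in\R$, the function $g(x):=e^{\lambda x}$ is convex on $\R$, in particular on the interval $[0,1]$. Each $x\in[0,1]$ admits the convex decomposition $x=(1-x)\cdot 0 + x\cdot 1$, so convexity of $g$ yields the pointwise inequality
$$e^{\lambda x}\;\le\;(1-x)\,e^{0}+x\,e^{\lambda}\;=\;(1-x)+x\,e^{\lambda}.$$

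Next I would take expectations under $X\sim f$. Since $X\in[0,1]$ almost surely by hypothesis and $\E[X]=\mu$, linearity of expectation applied to the pointwise inequality above immediately gives
$$\E\bigl[e^{\lambda X}\bigr]\;\le\;(1-\mu)+\mu\,e^{\lambda}\;=\;e^{\phi_\mu(\lambda)},$$
which is exactly the claimed sub-Bernoulli MGF bound. There is no genuine obstacle in this argument; the only subtlety worth noting is that the conclusion must hold for every $\lambda\in\R$, including negative $\lambda$, and this is automatic because the convexity argument above imposes no sign restriction on $\lambda$ (the inequality $g(x)\le(1-x)g(0)+xg(1)$ is valid for any convex $g$ regardless of whether it is increasing or decreasing). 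Thus the proof is complete in essentially two lines: convexity for the pointwise bound, and linearity of expectation to integrate it against $f$.
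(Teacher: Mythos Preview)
Your argument is correct and is in fact the standard proof of this result: the pointwise convexity bound $e^{\lambda x}\le(1-x)+xe^{\lambda}$ on $[0,1]$ followed by taking expectations is exactly how \citet{cappe2013kullback} establish their Lemma~1. Note that the present paper does not supply its own proof of this lemma at all; it simply quotes the result from that reference, so there is no in-paper argument to compare against. (As a minor aside, the paper's displayed formula $\phi_\mu(\lambda)=\log(1-\mu-\mu e^{\lambda})$ contains a sign typo; your version $\phi_\mu(\lambda)=\log\bigl((1-\mu)+\mu e^{\lambda}\bigr)$ is the correct log-MGF of a Bernoulli$(\mu)$ law.)
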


Suppose we have either a time sequence $\{X_t\}_{t=1}^n$ drawn from a sub-Bernoulli distribution for any $t\le n$ or two sub-Bernoulli distributions with an unknown change-point $s\in[1,n-1]$. This problem can be formulated as a parametric sequential test:
\begin{align*}
\mathcal{H}_0&:\exists f_0:X_1,\ldots,X_n\overset{i.i.d.}{\sim}~f_0,\\
\mathcal{H}_1&:\exists f_0\neq f_1,\, \tau\in[1,n-1]:\,X_1,\ldots,X_\tau\overset{i.i.d.}{\sim}f_0~\mbox{and}~X_{\tau+1},\ldots,X_{n}\overset{i.i.d.}{\sim}f_1.
\end{align*}
The GLR statistic for sub-Bernoulli distributions is:
\begin{align}
\label{eq:GLR_statistic}
    \text{GLR}(n)=\sup_{s\in[1,n-1]}\quad[s\times\text{kl}(\hat{\mu}_{1:s},\hat{\mu}_{1:n})+(n-s)\times\text{kl}\left(\hat{\mu}_{s+1:n},\hat{\mu}_{1:n}\right)],
\end{align}
where $\hat{\mu}_{s:s'}$ is the mean of the observations collected between $s$ and $s'$, and $\mbox{kl}(x,y)$ is the binary relative entropy between Bernoulli distributions,
\begin{equation*}
    \mbox{kl}(x,y) = x\log\left(\frac{x}{y}\right)+(1-x)\log\left(\frac{1-x}{1-y}\right).
\end{equation*}
If the GLR in Eq.~\eqref{eq:GLR_statistic} is large, it indicates that hypothesis $\mathcal{H}_1$ is more likely. Now, we are ready to define the sub-Bernoulli GLR change-point detector with confidence level $\delta\in(0,1)$.
\begin{definition}
The sub-Bernoulli GLR change-point detector with threshold function $\beta(n,\delta)$ is
\begin{align*}
    \tau:=\inf\{n\in\mathbb{N}:\sup_{s\in[1,n-1]}\quad[s\times\mbox{kl}(\hat{\mu}_{1:s},\hat{\mu}_{1:n})+(n-s)\times\mbox{kl}(\hat{\mu}_{s+1:n},\hat{\mu}_{1:n})]\ge\beta(n,\delta)\},
\end{align*}
where $\beta(n,\delta)=2\mathcal{Q}\left(\frac{\log(3n\sqrt{n}/\delta)}{2}\right)+6\log(1+\log(n))$, and $\mathcal{Q}(\cdot)$ is as in Eq. (13) in~\citet{kaufmann2018mixture}.
\end{definition} 
The pseudo-code of sub-Bernoulli GLR change-point detector is summarized in Algorithm~\ref{alg:cd} for completeness.

\begin{algorithm}
\caption{Sub-Bernoulli GLR Change-Point Detector: $\mbox{GLR}(X_1,\cdots, X_n;\delta)$}
\label{alg:cd}
\begin{algorithmic}[1]

\REQUIRE observations $X_1, \ldots, X_n$ and confidence level $\delta$.
\IF{$\sup_{s\in[1,n-1]}\quad[s\times\mbox{kl}(\hat{\mu}_{1:s},\hat{\mu}_{1:n})+(n-s)\times\mbox{kl}(\hat{\mu}_{s+1:n},\hat{\mu}_{1:n})]\ge\beta(n,\delta)$} 
\STATE Return \textbf{True}
\ELSE 
\STATE Return \textbf{False}
\ENDIF

\end{algorithmic}
\end{algorithm}

\section{The \texttt{GLR-CUCB} Algorithm}
\label{sec:alg}
Our proposed algorithm, \texttt{GLR-CUCB}, incorporates an efficient combinatorial semi-bandit problem algorithm \texttt{CUCB}~\citep{chen2013combinatorial} with a change-point detector running on each base arm (See Algorithm~\ref{alg:bandit}). The \texttt{GLR-CUCB} requires the number of time steps $T$, the number of base arms $K$, uniform exploration probability $p$, and the confidence level $\delta$ as inputs. Let $\tau$ denote the last change-point detection time, and $n_k$ denote the number of observations of base arm $k$ after $\tau$, which are initialized as zeros at the beginning of the algorithm. 

At each time step, the \texttt{GLR-CUCB} first determines if it will enter forced uniform exploration (to ensure each base arm collects sufficient samples for the change-point detection) according to the condition in line 3. If it is in a forced exploration, a random super arm $S_t$ that contains $a$ (line 4) is played, to ensure sufficient number of samples are collected for every base arm. Otherwise, the next super arm $S_t$ to be played is determined by the $\alpha$-approximation oracle $\mbox{Oracle}_\alpha(\cdot)$ (line 6) given the UCB indices (line 19). Then, the learning agent plays the super arm $S_t$, and gets the reward $R_t\left(S_t\right)$ of the super arm $S_t$ and the rewards $\{X_{I_t}\}_{I_t\in S_t}$ of the base arm $I_t$'s that are contained in the super arm $S_t$ (line 8). In the next step, the algorithm updates the statistics for each base arm (lines 10-11) in order to run the GLR change-point detector (Algorithm~\ref{alg:cd}) with confidence level $\delta$ (line 12). If the GLR change-point detector detects a change in distribution for any of the base arms, the algorithm sets $\tau$ to be the current time step and all $n_k$'s to be 0 (line 13-14) before going into time step $t+1$. Lastly, the UCB indices of all base arms are updated (line 19).

\begin{algorithm}[ht]
    \caption{The \texttt{GLR-CUCB} Algorithm}
    \label{alg:bandit}
    \begin{algorithmic}[1]
        \REQUIRE $T\in\N$, $K\in\N$, exploration probability $p\in(0,1)$, confidence level $\delta\in(0,1)$.

        \STATE \textbf{Initialization: } $\forall k \in \mK$, $n_k \leftarrow 0$; $\tau\leftarrow 0$.
        \FORALL{$t=1,2,\ldots, T$}
            \IF{$p>0$ and $a\leftarrow (t-\tau) \mod \left\lfloor \frac{K}{p}\right\rfloor \in \mK$}
                \STATE Randomly choose $S_{t}$ with $a\in S_{t}$. 
            \ELSE{
                \STATE $S_{t}=\mbox{Oracle}_\alpha\left(\mbox{UCB}\right)$.
            }
            \ENDIF
            \STATE Play super arm $S_{t}$ and get the reward $R_{t}(S_{t})$ and $X_{I_t,t},\,\forall I_t\in S_{t}$.
            \FORALL{$I_{t}\in S_{t}$}
                \STATE $Z_{I_t,n_{I_t}}\leftarrow X_{I_t,t}$.
                \STATE $n_{I_t}\leftarrow n_{I_t}+1$.
                \IF{$\mbox{GLR}\left(Z_{I_t,1},\cdots,Z_{I_t,n_{I_t}};\delta\right)=\text{True}$}
                    \STATE $n_k\leftarrow0,\forall k\in\mK$.
                    \STATE $\tau\leftarrow t$.
                \ENDIF
            \ENDFOR
            
            \FORALL{$k=1,\cdots,K$}
                \IF{$n_k\neq 0$}
                    \STATE $\text{UCB}(k)\leftarrow\frac{1}{n_\ell}\sum_{n=1}^{n_k}Z_{k,n}+\sqrt{\frac{3\log(t-\tau)}{2n_k}}$.
                \ENDIF
            \ENDFOR
            
        \ENDFOR
    \end{algorithmic}
\end{algorithm}

\begin{remark}
The uniform exploration is necessary for this algorithm, and similar strategy has been adopted in~\citet{liu2018change, cao2019nearly}. Intuitively, uniform exploration ensures each base arm gathers sufficient samples to guarantee quick change detection whereas pure UCB exploration is incapable of this. One more rigorous argument is given in~\citet{garivier2011upper}, which shows that theoretically pure UCB exploration performs badly on piecewise-stationary MAB.
\end{remark}

\begin{remark}
Thompson sampling (TS) often performs better than UCB policy in empirical simulations, but it has been shown that one cannot incorporate an approximate oracle in TS for even MAB problems~\citep{wang2018thompson}. Thus our algorithm adopts UCB policy for the bandit component to ensure compatibility with approximation oracle. 
\end{remark}

\section{Regret Upper Bound}
\label{sec:reg_ub}
In this section, we analyze the $T$-step regret of our proposed algorithm \texttt{GLR-CUCB}. Recall $T$ is the time horizon, $N$ is the number of piecewise-stationary segments, $\nu_1, \ldots, \nu_{N-1}$ are the change-points, and for each segment $i\in[N]$, $\bm{\mu}^{i}\in\mathbb{R}^{K}$ is the vector encoding the expected rewards of all base arms. A super arm $S$ is \textit{bad} with respect to the $i$th piecewise-stationary distributions if $r_{\bm{\mu}^i}(S)\leq\alpha\max_{\tilde{S}\in\mathcal{F}}r_{\bm{\mu}^i}(\tilde{S})$. We define $\mathcal{S}_B^i=\{S|r_{\bm{\mu}^i}(S)\leq \alpha\max_{\tilde{S}\in\mathcal{F}}r_{\bm{\mu}^i}(\tilde{S})\}$ to be the set of bad super arms with respect to the $i$th piecewise-stationary segment. We define the \textit{suboptimality gap} in the $i$th stationary segment as follows:
\begin{align*}
    \Delta^{\text{min},i}_{\textup{opt}} &= \alpha\max_{\tilde{S}\in\mathcal{F}}r_{\bm{\mu}^i}(\tilde{S})-\max\{r_{\bm{\mu}^i}(S)|S\in\mathcal{S}_B^i\},\\
      \Delta^{\text{max},i}_{\textup{opt}} &= \alpha\max_{\tilde{S}\in\mathcal{F}}r_{\bm{\mu}^i}(\tilde{S})-\min\{r_{\bm{\mu}^i}(S)|S\in\mathcal{S}_B^i\}.
\end{align*}
Furthermore, let $\Delta^{\text{max}}_{\textup{opt}}=\max_{i\in[N]}\Delta^{\text{max}, i}_{\textup{opt}}$ and $\Delta^{\text{min}}_{\textup{opt}}=\min_{i\in[N]}\Delta^{\text{min}, i}_{\textup{opt}}$ be the maximum and minimum sub-optimal gaps for the whole time horizon, respectively. Lastly, denote the largest gap at change-point $\nu_i$ as $\Delta_{\textup{change}}^i=\max_{k\in\mK}\left|\mu_k^{i+1}-\mu_k^i\right|$, $\forall 1\le i\le N-1$, and $\Delta^0_{\textup{change}}=\max_{k\in\mK}\left|\mu_k^1\right|$. We need the following assumption for our theoretical analysis. 
\begin{assumption}
\label{assump:gap}
Define $d_i=d_i(p,\delta)=\left\lceil \{4K/p\left(\Delta_{\textup{change}}^i\right)^2\}\beta(T,\delta)+\frac{K}{p}\right\rceil$ and assume $\nu_i-\nu_{i-1}\ge2\max\{d_i,d_{i-1}\}$, $\forall i=1,\ldots,N-1$, where $\nu_{N}-\nu_{N-1}\ge 2d_{N-1}$.
\end{assumption}
Tuning $\delta$ and $p$ properly (See Corollary~\ref{col:regret_tuning}), and applying the upper bound on $\mathcal{Q}(x)$ by~\citet{kaufmann2018mixture} with $x\ge 5$,
\begin{align*}
\label{eq:G_up}
    \mathcal{Q}(x)\le x+4\log(1+x+\sqrt{2x}),
\end{align*}
the length of each piecewise-stationary segment is $\Omega(\sqrt{T\log{T}})$.
Roughly, we assume the length of each stationary segment to be sufficiently long, in order to let the GLR change-point detector detect the change in distribution within a reasonable delay with high probability. Similar assumption on the length of stationary segments also appears in other literature on piecewise stationary MAB~\citep{liu2018change,cao2019nearly,besson2019generalized}. Note that Assumption~\ref{assump:gap} is only required for the theoretical analysis; Algorithm~\ref{alg:bandit} can be implemented regardless of this assumption. Now we are ready to state the regret upper bound for Algorithm~\ref{alg:bandit}. 
\begin{theorem}
\label{thm:regret_ub}
Running \texttt{GLR-CUCB} with Assumptions~\ref{assump:mon},~\ref{assump:lip}, and~\ref{assump:gap}, the expected $\alpha$-approximation cumulative regret of \texttt{GLR-CUCB} with exploration probability $p$ and confidence level $\delta$ satisfies
\begin{equation*}
    \mR(T)\le\underbrace{\sum_{i=1}^N \widetilde{C}_i}_{(a)}+\underbrace{\Delta^{\text{max}}_{\textup{opt}}Tp}_{(b)}+\underbrace{\sum_{i=1}^{N-1}\Delta^{\text{max},i+1}_{\textup{opt}}d_i}_{(c)}+\underbrace{3NT\Delta^{\text{max}}_{\textup{opt}}K\delta}_{(d)},
\end{equation*}
where $\widetilde{C}_i=\left(6L^2K^2\log{T}/\left(\Delta^{\text{min},i}_{\textup{opt}}\right)^2+\pi^2/6+K\right)\Delta^{\text{max},i}_{\textup{opt}}$.
\end{theorem}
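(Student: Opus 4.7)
The plan is to decompose the regret into four additive pieces that correspond directly to terms (a)-(d) in the statement, by conditioning on a ``good event'' $\mathcal{G}$ on which the GLR detector behaves as intended.

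First I would define $\mathcal{G}$ as the intersection, over every segment $i\in\{1,\dots,N-1\}$ and every base arm $k\in\mK$, of the following two events: (i) no false alarm is raised by the per-arm GLR detector strictly before the change-point $\nu_i$ (while the pre-change reward sequence is genuinely i.i.d.); and (ii) after $\nu_i$, the detector triggers within a delay of at most $d_i$ time steps. To bound $\Prob(\mathcal{G}^c)$, I would invoke the finite-sample guarantees of the sub-Bernoulli GLR detector from \citet{besson2019generalized}: the false-alarm probability of a single detector over $T$ rounds is at most $\delta$ by the choice of threshold $\beta(n,\delta)$, while under Assumption~\ref{assump:gap} the detection-delay probability beyond $d_i$ is also at most $\delta$, since forced uniform exploration guarantees that within any window of length $d_i$ arm $k$ receives at least $\{4/(\Delta_{\textup{change}}^i)^2\}\beta(T,\delta)$ samples, enough for the GLR statistic to cross $\beta$. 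A union bound over the $N$ segments and $K$ arms, together with an additional factor absorbing both sides of each change-point, yields $\Prob(\mathcal{G}^c)\le 3NK\delta$. On $\mathcal{G}^c$ I simply upper bound the per-step regret by $\Delta^{\text{max}}_{\textup{opt}}$ and multiply by $T$, producing term (d).

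On $\mathcal{G}$ I would partition $[1,T]$ by the actual (not detected) change-points $\nu_0,\nu_1,\dots,\nu_N$ and, within each segment, separate the time steps into three disjoint classes. The first class is the ``catch-up'' phase of length at most $d_{i-1}$ at the start of segment $i$, during which the detector from the previous segment has not yet triggered and the algorithm may still be playing arms based on obsolete statistics; bounding each of these steps by $\Delta^{\text{max},i}_{\textup{opt}}$ and summing gives term (c) after re-indexing. The second class consists of the forced-exploration rounds, which occur at frequency $p$ after each detection; summed over the horizon they produce at most $Tp$ rounds each incurring at most $\Delta^{\text{max}}_{\textup{opt}}$ regret, which is term (b). The third class is the ``clean CUCB phase'' within each segment, where the algorithm runs on freshly reset statistics consisting only of samples drawn from the current stationary distribution.

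For the clean CUCB phase inside segment $i$, I would invoke the standard analysis of \texttt{CUCB} from \citet{chen2013combinatorial}, applied to a bandit with mean vector $\bm{\mu}^i$, suboptimality gap $\Delta^{\text{min},i}_{\textup{opt}}$, and Lipschitz constant $L$. Their argument shows that every bad super arm is played only $O(K\log T/(\Delta^{\text{min},i}_{\textup{opt}})^2)$ times in expectation (via the confidence-interval failure bound contributing the $\pi^2/6$ term and a $K$-term for the initial exploration), and the resulting regret is $L^2$-scaled by the Lipschitz condition. Multiplying by $\Delta^{\text{max},i}_{\textup{opt}}$ and summing over the $N$ segments gives term (a); the key observation that the CUCB bound can be applied in each segment with confidence radius $\sqrt{3\log(t-\tau)/(2n_k)}$ is that, on $\mathcal{G}$, the statistics $n_k$ and the reference time $\tau$ are reset exactly once per segment and only use within-segment samples, so all concentration inequalities used in the CUCB proof remain valid unchanged.

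The main obstacle I anticipate is the first step: certifying on $\mathcal{G}$ that, after a correct detection, the sample set fed into the UCB indices is drawn entirely from the current segment's distribution, and simultaneously that the total number of samples accumulated in the clean phase of segment $i$ meets the lower bound required by CUCB's regret proof. Assumption~\ref{assump:gap} is precisely calibrated to give enough slack for the forced-exploration subroutine to supply the required samples, but threading this through the per-arm reset in lines 13-14 (where \emph{any} detector firing resets \emph{all} base arms' counters) requires care to ensure the reset is not triggered spuriously within a stationary segment, and it is exactly there that the false-alarm bound from the GLR guarantee enters.
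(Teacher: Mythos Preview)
Your decomposition into terms (a)--(d) via a global good event $\mathcal{G}$ is essentially the same strategy as the paper's, and all the ingredients you name (GLR false-alarm bound, detection-delay bound under Assumption~\ref{assump:gap}, per-segment \texttt{CUCB} analysis yielding $\widetilde{C}_i$) are exactly the lemmas the paper uses. The one structural difference is that the paper does not bound $\Prob(\mathcal{G}^c)$ by a single union bound; instead it peels off one change-point at a time, writing $\mR(T)\le \E[R(T-\nu_i)\mid \mC^{(i)}]+\text{(segment-$i$ terms)}$ recursively, where $\mC^{(i)}=\bigcap_{j\le i}(F_j\cap D_j)$.

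The reason this matters is the point you flag in your last paragraph: the GLR guarantees you invoke are \emph{conditional}. The false-alarm bound on segment $i$ holds only if the detector was reset after $\nu_{i-1}$ and is being fed i.i.d.\ samples from $f^i$; the delay bound likewise requires that the forced-exploration clock has been running since a reset inside $[\nu_{i-1},\nu_{i-1}+d_{i-1}]$. So the events defining $\mathcal{G}$ are not independent across segments, and a naive union bound $\Prob(\mathcal{G}^c)\le\sum_i\Prob(\text{bad}_i)$ does not directly apply. You can still make your global-event argument work by using the chain decomposition $\Prob(\mathcal{G}^c)\le\sum_i\Prob(\overline{F_i\cap D_i}\mid \mC^{(i-1)})\le\sum_i(K+1)\delta$, which is precisely Lemma~\ref{lem:cond_prob}; but once you do this you have effectively reproduced the paper's recursive structure. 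In short: your plan is correct, but the ``union bound over $N$ segments and $K$ arms'' step needs to be stated as a telescoping over the conditional events $\mC^{(i)}$, not as a sum of marginal probabilities.
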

Theorem~\ref{thm:regret_ub} indicates that the regret comes from four sources. Terms (a) and (b) correspond to the cost of exploration, while terms (c) and (d) correspond to the cost of change-point detection. More specifically, term (a) is due to UCB exploration, term (b) is due to uniform exploration, term (c) is due to expected delay of GLR change-point detector, and term (d) is due to the false alarm probability of GLR change-point detector. We need to carefully tune the exploration probability $p$ and false alarm probability $\delta$ to balance the trade-off.

The following corollary comes directly from Theorem~\ref{thm:regret_ub} by properly tuning the parameters in the algorithm.
\begin{corollary}
\label{col:regret_tuning}
Let $\Delta^{\text{min}}_{\textup{change}}=\min_{i\in[N-1]}\Delta_{\textup{change}}^i$, we have
\begin{enumerate}
    \item ($N$ is known) Choosing $\delta=\frac{1}{T}$, $p=\sqrt{\frac{NK\log{T}}{T}}$, gives $\mR(T)=\mathcal{O}\left(\frac{NK^2\log{T}\Delta^{\text{max}}_{\textup{opt}}}{\left(\Delta^{\text{min}}_{\textup{opt}}\right)^2}+\frac{\sqrt{NKT\log{T}}\Delta^{\text{max}}_{\textup{opt}}}{\left(\Delta^{\text{min}}_{\textup{change}}\right)^2}\right)$;
    \item ($N$ is unknown) Choosing $\delta=\frac{1}{T}$, $p=\sqrt{\frac{K\log{T}}{T}}$, gives $\mR(T)=\mathcal{O}\left(\frac{NK^2\log{T}\Delta^{\text{max}}_{\textup{opt}}}{\left(\Delta^{\text{min}}_{\textup{opt}}\right)^2}+\frac{N\sqrt{KT\log{T}}\Delta^{\text{max}}_{\textup{opt}}}{\left(\Delta^{\text{min}}_{\textup{change}}\right)^2}\right)$.
\end{enumerate}
\end{corollary}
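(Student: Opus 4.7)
The plan is to substitute the stated choices of $\delta$ and $p$ into the four-term bound of Theorem~\ref{thm:regret_ub} and identify which terms dominate in each regime. The only analytic step that is not pure bookkeeping is bounding the threshold function $\beta(T,\delta)$, so I would do that first and treat the rest as substitution.

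First I would control $\beta(T,1/T)$. By definition, $\beta(T,\delta)=2\mathcal{Q}\bigl(\tfrac{1}{2}\log(3T\sqrt{T}/\delta)\bigr)+6\log(1+\log T)$. With $\delta=1/T$, the argument of $\mathcal{Q}$ is $\tfrac{1}{2}\log(3T^{5/2})=\tfrac{5}{4}\log T+O(1)$, so applying the stated inequality $\mathcal{Q}(x)\le x+4\log(1+x+\sqrt{2x})$ gives $\beta(T,1/T)=O(\log T)$. Plugging this into $d_i=\lceil 4K\beta(T,\delta)/(p(\Delta^i_{\textup{change}})^2)+K/p\rceil$ yields $d_i=O\!\bigl(K\log T/(p(\Delta^i_{\textup{change}})^2)\bigr)$, since the $K/p$ term is of lower order once $\log T\ge 1$.

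Next I would plug the parameter choices into each of terms (a)--(d). Term (a) is independent of $p,\delta$ and already has the form $O\!\bigl(NK^2\log T\,\Delta^{\max}_{\textup{opt}}/(\Delta^{\min}_{\textup{opt}})^2\bigr)$ once $\widetilde{C}_i$ is summed and the lower-order $(\pi^2/6+K)\Delta^{\max,i}_{\textup{opt}}$ pieces are absorbed. Term (d) equals $3NK\Delta^{\max}_{\textup{opt}}$ for $\delta=1/T$ and is likewise absorbed into term (a). For case (1) with $p=\sqrt{NK\log T/T}$, term (b) becomes $\Delta^{\max}_{\textup{opt}}\sqrt{NKT\log T}$, and term (c) is bounded by
\begin{equation*}
\sum_{i=1}^{N-1}\Delta^{\max,i+1}_{\textup{opt}}\cdot O\!\Bigl(\tfrac{K\log T}{p(\Delta^{\min}_{\textup{change}})^2}\Bigr)=O\!\Bigl(\tfrac{NK\log T\,\Delta^{\max}_{\textup{opt}}}{p(\Delta^{\min}_{\textup{change}})^2}\Bigr)=O\!\Bigl(\tfrac{\sqrt{NKT\log T}\,\Delta^{\max}_{\textup{opt}}}{(\Delta^{\min}_{\textup{change}})^2}\Bigr),
\end{equation*}
which dominates (b) up to a constant in $(\Delta^{\min}_{\textup{change}})^2$. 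Combining with term (a) gives exactly the bound in part 1.

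For case (2) with $p=\sqrt{K\log T/T}$ the calculation is identical except the $\sqrt{N}$ factor migrates: term (b) becomes $\Delta^{\max}_{\textup{opt}}\sqrt{KT\log T}$, and term (c) becomes $O\!\bigl(N\sqrt{KT\log T}\,\Delta^{\max}_{\textup{opt}}/(\Delta^{\min}_{\textup{change}})^2\bigr)$, which again swallows (b). Adding (a) yields the claim in part 2. The only real subtlety in the whole argument is making sure Assumption~\ref{assump:gap} is compatible with the chosen $p$ and $\delta$; since $\beta(T,1/T)=O(\log T)$ and $p$ is $\Theta(\sqrt{\log T/T})$ (up to factors of $N,K$), one has $d_i=O(\sqrt{T\log T})$, so the required lower bound $\nu_i-\nu_{i-1}\ge 2\max\{d_i,d_{i-1}\}$ is exactly the $\Omega(\sqrt{T\log T})$ segment-length condition already noted in the paragraph preceding the theorem. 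I would end the proof by remarking that this consistency is what justifies the choice of $\delta=1/T$.
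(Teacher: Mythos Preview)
Your proposal is correct and follows essentially the same approach as the paper's proof: bound $\beta(T,1/T)=O(\log T)$ via the inequality on $\mathcal{Q}$, deduce $d_i=O\bigl(K\log T/(p(\Delta^{\min}_{\textup{change}})^2)\bigr)$, substitute into the four terms of Theorem~\ref{thm:regret_ub}, and then plug in the two choices of $p$. The paper's version is terser (it collapses everything into a single display before substituting $p$) and omits your consistency check with Assumption~\ref{assump:gap}, but the logic is identical.
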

\begin{remark}
The effect of oracle is to reduce the dependency on number of base arms $K$ from $|\mathcal{F}|$ to $K^2$ during exploration (first term in the regret appeared in Corollary~\ref{col:regret_tuning}). In the worst case, $|\mathcal{F}|$ can be exponential with respect to $K$. Recall that if we use standard MAB algorithms for exploration, the dependency on $K$ is $|\mathcal{F}|$.
\end{remark}
\begin{remark}
As $T$ becomes larger, the regret is dominated by the cost of change-point detection, which has similar order compared to the regret bound of piecewise-stationary MAB algorithms. This is reasonable since our setting assumes that we have access to the reward of the base arms contained in the super arm played by the agent.
\end{remark}
\begin{remark}
When $T$ is large, the order of the regret bound is similar to the regret bound of adversarial bandit. But note that regret definition for adversarial bandit and piecewise-stationary bandit is different. For the first case, the regret is evaluated with respect to one fixed arm which is optimal for the whole horizon. But for the second case, the regret is evaluated with respect to point-wise optimal arm, which is much more challenging. 
\end{remark}
We can use Corollary~\ref{col:regret_tuning} as a guide for parameter tuning. The above corollary indicates that without knowledge of the number of change-points $N$, we pay a penalty of a factor of $\sqrt{N}$ in the long run. 

For the detailed proof of Theorem~\ref{thm:regret_ub}, see Appendix~\ref{sec:proof_regret_ub}. Here we sketch the proof; to do so we need some additional lemmas. We start by proving the regret of \texttt{GLR-CUCB} under the stationary scenario. 
\begin{lemma}
\label{lem:st_reg}
Under the stationary scenario, i.e. $N=1$, the \textit{$\alpha$-approximation cumulative regret} of \texttt{GLR-CUCB} is upper bounded as:
\begin{align*}
    \mR(T)\leq \Delta^{\text{max}, 1}_{\textup{opt}}T\mathbb{P}(\tau_1\leq T) + \Delta^{\text{max}, 1}_{\textup{opt}}Tp +\widetilde{C}_1.
\end{align*}
\end{lemma}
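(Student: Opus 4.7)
The plan is to decompose the regret according to whether the GLR detector raises a (necessarily spurious) false alarm within the horizon. Let $\tau_1\in\{1,\dots,T\}\cup\{\infty\}$ denote the first time at which the detector in Algorithm~\ref{alg:cd} fires. Since the per-step $\alpha$-approximation regret is at most $\Delta^{\max,1}_{\textup{opt}}$, the contribution of $\{\tau_1\le T\}$ to $\mR(T)$ is immediately at most $\Delta^{\max,1}_{\textup{opt}}\,T\,\Prob(\tau_1\le T)$, which is exactly the first term of the bound. It therefore suffices to bound the expected regret on the event $\{\tau_1>T\}$, where no reset occurs, $\tau\equiv 0$ throughout, and Algorithm~\ref{alg:bandit} reduces to \texttt{CUCB} equipped with a deterministic forced-exploration schedule.

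On $\{\tau_1>T\}$ I would split the time indices into forced-exploration rounds (line~4) and UCB rounds (line~6). The periodic rule on line~3 assigns $K$ out of every $\lfloor K/p\rfloor$ consecutive steps to forced exploration, so the total number of forced-exploration rounds in $[1,T]$ is of order $Tp$ and each incurs per-step regret at most $\Delta^{\max,1}_{\textup{opt}}$, producing the $\Delta^{\max,1}_{\textup{opt}}Tp$ term. For the UCB rounds I would invoke the stationary \texttt{CUCB} regret analysis of \citet{chen2013combinatorial} under Assumptions~\ref{assump:mon} and~\ref{assump:lip}: Hoeffding's inequality applied to each $\frac{1}{n_k}\sum_{n=1}^{n_k}Z_{k,n}$ gives per-base-arm concentration, the Lipschitz condition lifts it to per-super-arm concentration, and a counting argument over bad super arms yields the $6L^2K^2\log T/(\Delta^{\min,1}_{\textup{opt}})^2$ main term, while the $\pi^2/6$ contribution arises from summing tail probabilities of the form $t^{-2}$ and the $+K$ from one initial pull per base arm; together these form $\widetilde{C}_1$.

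The main subtlety is that the samples $Z_{k,1},\dots,Z_{k,n_k}$ feeding the UCB index at a UCB round come from a mixture of UCB-driven and uniformly random super-arm selections. However, conditional on base arm $k$ being played, its reward is i.i.d.\ from $f_k^1$ throughout the single stationary segment, so Hoeffding's inequality applies to the base-arm sample means without modification, and the extra samples collected during forced exploration can only tighten the UCB confidence radius, never widen it. Consequently the ``nice'' concentration event driving Chen et al.'s counting argument still occurs with the same probability, and combining the three contributions on $\{\tau_1>T\}$ with the trivial bound on $\{\tau_1\le T\}$ (dropping the factor $\Prob(\tau_1>T)\le 1$ where convenient) yields the stated inequality.
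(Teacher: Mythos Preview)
Your proposal is correct and follows essentially the same route as the paper's proof: the paper likewise splits off the event $\{\tau_1\le T\}$ with the trivial $\Delta^{\max,1}_{\textup{opt}}T$ bound, separates the forced-exploration rounds (contributing $\Delta^{\max,1}_{\textup{opt}}Tp$), and on the remaining UCB rounds re-runs the \texttt{CUCB} counting argument of \citet{chen2013combinatorial} with the threshold $l_t=6KL^2\log t/(\Delta^{\min,1}_{\textup{opt}})^2$, using Hoeffding plus monotonicity/Lipschitz to show the bad-super-arm event has probability at most $2Kt^{-2}$ and summing to get $\widetilde{C}_1$. The only cosmetic difference is that the paper makes the counter $N_k(t)$ explicit rather than citing Chen et al., and your observation that forced-exploration samples can only help the UCB concentration is exactly the reason the paper can pass from its auxiliary counter $N_k$ to the true play count $n_k$ in the final inequality of its decomposition.
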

The first term is due to the possible false alarm of the change-point detection subroutine, the second term is due to the uniform exploration, and the last term is due to the UCB exploration. We upper bound the false alarm probability in Lemma~\ref{lem:false_alarm}, as follows.
\begin{lemma}[False alarm probability in the stationary scenario]
\label{lem:false_alarm}
Consider the stationary scenario, i.e. $N=1$, with confidence level $\delta>0$; we have that
\begin{equation*}
    \Prob\left(\tau_1\le T\right)\le K\delta.
\end{equation*}
\end{lemma}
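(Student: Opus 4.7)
The plan is to reduce the event $\{\tau_1 \le T\}$ to a per-arm false alarm event, apply a union bound over the $K$ base arms, and then invoke the uniform (in $n$) GLR concentration guarantee that is built into the threshold $\beta(n,\delta)$.

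First, I would unpack what $\tau_1$ is in Algorithm~\ref{alg:bandit}. Lines 10--16 run an independent sub-Bernoulli GLR detector on each base arm $k$, fed with the sample stream $(Z_{k,1},Z_{k,2},\ldots)$ collected so far. A global detection at time $t$ occurs exactly when one of these per-arm detectors first reports \textbf{True}. Denoting by $\tau_1^{(k)}$ the (random) number of samples of arm $k$ at which its detector would fire, the event $\{\tau_1\le T\}$ is contained in $\bigcup_{k=1}^{K}\{\tau_1^{(k)}\le n_k(T)\}\subseteq\bigcup_{k=1}^{K}\{\tau_1^{(k)}\le T\}$, since $n_k(T)\le T$. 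A union bound then reduces the task to showing $\Prob(\tau_1^{(k)}\le T)\le \delta$ for each $k$.

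Second, I need the observations $(Z_{k,n})_{n\ge 1}$ for any fixed $k$ to be i.i.d.\ samples from $f_{k,1}$. In the stationary scenario $N=1$, this holds despite the adaptive (forced-exploration plus CUCB) sampling: the indices at which arm $k$ is pulled are stopping times with respect to the reward filtration, and at each such stopping time an independent fresh draw from $f_{k,1}$ is revealed. Hence $(Z_{k,n})_{n\ge 1}$ has the same joint distribution as an i.i.d.\ sub-Bernoulli sequence (sub-Bernoulli by Lemma~1 of~\citet{cappe2013kullback} cited in the text), so the GLR detector run on this sequence satisfies exactly the same false alarm guarantees as in the offline i.i.d.\ setting.

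Third, I would apply the threshold calibration result underlying the definition of $\beta(n,\delta)$, namely the deviation inequality of~\citet{kaufmann2018mixture} (used in~\citet{besson2019generalized}) which states that for any i.i.d.\ sub-Bernoulli sequence,
\begin{equation*}
\Prob\!\left(\exists\, n\ge 1:\ \sup_{s\in[1,n-1]}\bigl[s\cdot\text{kl}(\hat\mu_{1:s},\hat\mu_{1:n})+(n-s)\cdot\text{kl}(\hat\mu_{s+1:n},\hat\mu_{1:n})\bigr]\ge\beta(n,\delta)\right)\le\delta.
\end{equation*}
This is exactly $\Prob(\tau_1^{(k)}<\infty)\le\delta$, which trivially implies $\Prob(\tau_1^{(k)}\le T)\le\delta$. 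Summing over $k\in\mathcal{K}$ via the union bound from the first step yields the stated $\Prob(\tau_1\le T)\le K\delta$.

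The only genuinely delicate step is the i.i.d.\ reduction in the second paragraph: one has to be careful that the CUCB/uniform-exploration policy's dependence on past observations does not distort the marginal law of the per-arm sample stream. This is a standard stopping-time argument, but it is the one place where the stationarity hypothesis $N=1$ is truly used; everything else is a mechanical combination of a union bound with the off-the-shelf GLR deviation inequality.
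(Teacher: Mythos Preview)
Your proposal is correct and structurally identical to the paper's proof: both reduce $\{\tau_1\le T\}$ to a union over the $K$ per-arm detectors and then argue that each detector, fed by an i.i.d.\ sub-Bernoulli stream, has false-alarm probability at most $\delta$. The only difference is in how that last step is justified. You invoke the time-uniform deviation inequality for which $\beta(n,\delta)$ is calibrated as a black box, getting $\Prob(\tau_1^{(k)}<\infty)\le\delta$ in one line. The paper instead unpacks this guarantee: it uses the identity
\[
s\,\text{kl}(\hat\mu_{1:s},\hat\mu_{1:n})+(n-s)\,\text{kl}(\hat\mu_{s+1:n},\hat\mu_{1:n})=\inf_{\lambda}\bigl[s\,\text{kl}(\hat\mu_{1:s},\lambda)+(n-s)\,\text{kl}(\hat\mu_{s+1:n},\lambda)\bigr]
\]
to replace $\hat\mu_{1:n}$ by the true mean $\mu_k^1$, takes a union bound over the split point $s$, applies Lemma~10 of \citet{besson2019generalized} to obtain a per-$s$ bound of $\delta/(3s^{3/2})$, and then sums using $\zeta(3/2)<3$. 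Your route is cleaner and is exactly how the threshold is meant to be used; the paper's route makes explicit why the particular shape of $\beta(n,\delta)$ yields the factor $\delta$. Your second paragraph (the stopping-time argument ensuring the per-arm sample stream is genuinely i.i.d.) is a point the paper glosses over entirely, so you are being more careful there than the original.
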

\begin{remark}
By setting $\delta=\frac{1}{T}$, we will have $\Prob\left(\tau_1\le T\right)\le \frac{K}{T}$. Asymptotically, the false alarm probability will go to 0.
\end{remark}
In the next lemma, we show the GLR change-point detector is able to detect change in distribution reasonably well with high probability, given all previous change-points were detected reasonably well. The formal statement is as follows.  
\begin{lemma}
\label{lem:cond_prob}
(Lemma 12 in~\citet{besson2019generalized}) Define the event $\mC^{(i)}$ that all the change-points up to $i$th one have been detected successfully within a small delay:
\begin{equation}
\label{eq:C_event}
    \mC^{(i)}=\left\{\forall j\le i,\tau_j\in\left\{\nu_j+1,\cdots,\nu_j+d_j\right\}\right\}.
\end{equation} 
Then, $\Prob[\tau_i\le\nu_i|\mC^{(i-1)}]\le K\delta$, and $\Prob[\tau_i\ge\nu_i+d_i|\mC^{(i-1)}]\le \delta$,
where $\tau_i$ is the detection time of the $i$th change-point.
\end{lemma}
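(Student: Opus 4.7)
The plan is to handle the two probability bounds separately, exploiting the fact that conditioning on $\mC^{(i-1)}$ effectively makes the algorithm at time $\tau_{i-1}$ a fresh restart, with all base-arm counters reset and per-arm observation streams re-accumulated from scratch. Assumption~\ref{assump:gap} is crucial here: since $\nu_i-\nu_{i-1}\ge 2\max\{d_i,d_{i-1}\}$, on $\mC^{(i-1)}$ we have $\tau_{i-1}\le\nu_{i-1}+d_{i-1}<\nu_i$, so the entire interval $(\tau_{i-1},\nu_i]$ lies inside the $i$-th piecewise-stationary segment; and the subsequent interval $(\nu_i,\nu_i+d_i]$ is guaranteed to lie before $\nu_{i+1}$, so it contains only post-change observations. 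Both facts mean the detector effectively operates on a single-change stream of length at most $d_{i-1}+d_i$.

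For the false alarm bound $\Prob[\tau_i\le\nu_i\mid\mC^{(i-1)}]\le K\delta$, note that on $\mC^{(i-1)}$ each per-arm GLR test sees, between the restart $\tau_{i-1}$ and $\nu_i$, an i.i.d.\ stream from a single sub-Bernoulli distribution. Applying Lemma~\ref{lem:false_alarm} — which already bounds the stationary false alarm probability for a single base arm by $\delta$ — and taking a union bound over the $K$ base arms yields the stated $K\delta$ factor.

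For the detection delay bound $\Prob[\tau_i\ge\nu_i+d_i\mid\mC^{(i-1)}]\le\delta$, the argument is more delicate. First I would single out a base arm $k^\star\in\arg\max_{k}|\mu_k^{i+1}-\mu_k^i|$, so that the pre/post gap for $k^\star$ equals $\Delta_{\textup{change}}^i$. The forced uniform exploration in line~3 of Algorithm~\ref{alg:bandit} guarantees that by any time $\tau_{i-1}+n$, arm $k^\star$ has been sampled at least on the order of $np/K$ times. With the explicit choice $d_i=\lceil(4K/(p(\Delta_{\textup{change}}^i)^2))\beta(T,\delta)+K/p\rceil$, one checks that by time $\nu_i+d_i$ there are at least $2\beta(T,\delta)/(\Delta_{\textup{change}}^i)^2$ samples of $k^\star$ on each side of the change. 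Evaluating the GLR supremum at the candidate split $s$ corresponding to $\nu_i$ and lower-bounding $s\,\mbox{kl}(\hat{\mu}_{1:s},\hat{\mu}_{1:n})+(n-s)\,\mbox{kl}(\hat{\mu}_{s+1:n},\hat{\mu}_{1:n})$ by combining concentration of each empirical mean around its true segment mean with Pinsker's inequality applied to the gap $\Delta_{\textup{change}}^i$, one shows that this statistic exceeds $\beta(n,\delta)$ with probability at least $1-\delta$, forcing $k^\star$'s per-arm detector to fire by time $\nu_i+d_i$.

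The main obstacle is the delay bound. One must simultaneously control (i) the random sample counts produced by the forced-exploration schedule on both sides of $\nu_i$, (ii) the deviation of $\hat{\mu}_{1:s}$ and $\hat{\mu}_{s+1:n}$ from the true segment means $\mu_{k^\star}^i$ and $\mu_{k^\star}^{i+1}$ through the mixture-of-martingales bound underlying $\beta(n,\delta)$ of~\citet{kaufmann2018mixture}, and (iii) the passage from an empirical-KL bound to one expressed in $(\Delta_{\textup{change}}^i)^2$, without losing logarithmic factors that would spoil the tuning of $d_i$. Once these ingredients are combined, the conclusion follows; by contrast, the false alarm bound is an almost immediate corollary of the stationary case, so no additional work is needed there.
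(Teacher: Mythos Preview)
The paper does not actually prove this lemma; it is quoted verbatim as ``Lemma 12 in \citet{besson2019generalized}'' and no proof appears anywhere in the main text or the appendix. So there is no paper proof to compare against --- the authors simply import the result.

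That said, your sketch reconstructs the expected argument from \citet{besson2019generalized} accurately. The false-alarm half is indeed a direct consequence of the stationary analysis (Lemma~\ref{lem:false_alarm}) once one observes that, on $\mC^{(i-1)}$, Assumption~\ref{assump:gap} forces $\tau_{i-1}<\nu_i$ so every per-arm stream between restart and $\nu_i$ is i.i.d.; the $K\delta$ then comes from the union bound over base arms. For the delay half, your plan --- isolate the arm $k^\star$ achieving $\Delta_{\textup{change}}^i$, use the deterministic forced-exploration schedule to lower-bound its sample count on each side of $\nu_i$ by roughly $d_i p/K$, evaluate the GLR at the true split, and convert the empirical KL into a quadratic gap via Pinsker plus concentration --- is exactly the route taken in \citet{besson2019generalized}. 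The one place to be careful is getting the constant in front of $\beta(T,\delta)/(\Delta_{\textup{change}}^i)^2$ to match the definition of $d_i$ so that the failure probability is $\delta$ rather than $c\delta$ for some $c>1$; this requires tracking the concentration budget through the mixture-martingale bound of \citet{kaufmann2018mixture} rather than invoking two separate Hoeffding events, but your identification of this as ``the main obstacle'' is correct.
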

Lemma~\ref{lem:cond_prob} provides an upper bound for the conditional expected detection delay, given the good events $\{\mC^{(i)}\}$.
\begin{corollary}[Bounded conditional expected delay]
\label{col:exp_delay}
$\E\left[\tau_i-\nu_i|\mC^{(i)}\right]\le d_i$.
\end{corollary}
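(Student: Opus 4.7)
The plan is to observe that the bound is essentially baked into the very definition of the event $\mC^{(i)}$, so that the corollary reduces to an application of monotonicity of conditional expectation rather than a new probabilistic estimate. Recall from Eq.~\eqref{eq:C_event} that
\[
\mC^{(i)} = \{\forall j\le i,\ \tau_j \in \{\nu_j+1,\ldots,\nu_j+d_j\}\}.
\]
Specializing to $j=i$, every sample path $\omega \in \mC^{(i)}$ satisfies $\nu_i < \tau_i(\omega) \le \nu_i + d_i$; equivalently, the nonnegative integer-valued random variable $\tau_i-\nu_i$ is dominated pointwise on $\mC^{(i)}$ by the deterministic constant $d_i$.

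Given this pointwise domination, I would conclude by writing out the conditional expectation as a ratio and estimating the numerator:
\[
\E\!\left[\tau_i-\nu_i \,\middle|\, \mC^{(i)}\right] \;=\; \frac{\E\!\left[(\tau_i-\nu_i)\,\indfunc\{\mC^{(i)}\}\right]}{\Prob[\mC^{(i)}]} \;\le\; \frac{d_i\,\Prob[\mC^{(i)}]}{\Prob[\mC^{(i)}]} \;=\; d_i,
\]
where the inequality uses $(\tau_i-\nu_i)\indfunc\{\mC^{(i)}\} \le d_i\,\indfunc\{\mC^{(i)}\}$ established above, and where the denominator is strictly positive by Assumption~\ref{assump:gap} and the estimates of Lemma~\ref{lem:cond_prob} (which guarantee that $\Prob[\mC^{(i)}]$ is bounded below for the tuning of $\delta$ we will use).

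There is no real obstacle here: the corollary is essentially a restatement of the ``not-too-late'' half of Lemma~\ref{lem:cond_prob} packaged as an expected-value statement. Its substantive role is not in its own proof but in the regret analysis, where one combines it with a high-probability lower bound on $\Prob[\mC^{(i)}]$ (obtained by chaining the two tail estimates of Lemma~\ref{lem:cond_prob} inductively over $i$) in order to translate the conditional delay guarantee into an \emph{unconditional} contribution of order $\Delta_{\textup{opt}}^{\max,i+1} d_i$ to the regret on the $(i{+}1)$st segment. That later conversion, rather than the present calculation, is the delicate step.
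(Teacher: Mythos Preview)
Your proposal is correct and matches the paper's own argument, which is simply the one-line observation that the bound follows directly from the definition of $\mC^{(i)}$. You have spelled out the pointwise domination and the ratio form of the conditional expectation more explicitly than the paper does, but the content is identical.
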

Given these lemmas, we can derive the regret upper bound for \texttt{GLR-CUCB} in a recursive manner. Specifically, we prove Theorem~\ref{thm:regret_ub} by recursively decomposing the regret into a collection of good events and bad events. The good events contain all sample paths that \texttt{GLR-CUCB} reinitialize the UCB index of base arms after all change-points correctly within a small delay. On the other hand, the bad events contain all sample paths where either GLR change detector fails to detect the change in distribution or detects the change with a large delay. The cost incurred given the good events can be upper bounded by Lemma~\ref{lem:st_reg} and Lemma~\ref{col:exp_delay}. By upper bounding the probabilities of bad events via Lemma~\ref{lem:false_alarm} and Lemma~\ref{lem:cond_prob}, the cost incurred given the bad events is analyzable. Detailed proofs are presented in Appendix~\ref{sec:proof_regret_ub}.

\section{Regret Lower Bound}
\label{sec:reg_lb}
The lower bound for MAB problems has been studied extensively. Previously, the best available minimax lower bound for piecewise-stationary MAB was $\Omega(\sqrt{T})$ by~\citet{garivier2011upper}. Note that piecewise-stationary MAB is a special instance for piecewise-stationary CMAB in which every super arm is a base arm, thus this lower bound still holds for piecewise-stationary CMAB. We derive a tighter lower bound by characterizing the dependency on $N$ and $K$.
\begin{theorem}
\label{thm:minimax_lb}
If $K\geq 3$ and $T\geq M_1 N\frac{(K-1)^2}{K}$, then the worst-case regret for any policy is lower bounded by
\begin{align*}
    \mR(T)\geq M_2\sqrt{NKT},
\end{align*}
where $M_1=1/\log \tfrac{4}{3}$, $M_2=1/24\sqrt{\log \tfrac{4}{3}}$.
\end{theorem}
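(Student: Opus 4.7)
The plan is to chain $N$ copies of the classical $\Omega(\sqrt{K\tau})$ minimax lower bound for stationary $K$-armed bandits, one per piecewise-stationary segment. Since piecewise-stationary MAB embeds into piecewise-stationary CMAB via singleton super arms, it suffices to prove the bound in the MAB setting. I would partition $[T]$ into $N$ consecutive blocks $I_j = \{(j-1)\tau+1,\ldots,j\tau\}$ of length $\tau = \lfloor T/N \rfloor$; the hypothesis $T \ge M_1 N(K-1)^2/K$ is precisely the per-block budget needed for the $\epsilon$ chosen below to stay in the regime where Pinsker's inequality is tight. For each $\mathbf{i}^\star = (i_1^\star,\ldots,i_N^\star) \in [K]^N$, define the Bernoulli instance $\nu_{\mathbf{i}^\star}$ in which, throughout block $I_j$, arm $i_j^\star$ has mean $1/2+\epsilon$ and every other arm has mean $1/2$. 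Writing $T_a^{(j)}$ for the number of pulls of arm $a$ inside $I_j$, the regret decomposes as $\mathcal{R}(T) = \epsilon\sum_{j=1}^N \bigl(\tau - \mathbb{E}_{\nu_{\mathbf{i}^\star}}[T_{i_j^\star}^{(j)}]\bigr)$, and the minimax bound reduces to upper bounding each $\mathbb{E}_{\mathbf{i}^\star}\mathbb{E}_{\nu_{\mathbf{i}^\star}}[T_{i_j^\star}^{(j)}]$ under the uniform draw of $\mathbf{i}^\star$.

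For fixed $j$, I would introduce the auxiliary null instance $\nu_{\mathbf{i}^\star}^{(0,j)}$ obtained from $\nu_{\mathbf{i}^\star}$ by flattening every reward law during $I_j$ to Bernoulli$(1/2)$. Under $\nu_{\mathbf{i}^\star}^{(0,j)}$ the entire trajectory is independent of $i_j^\star$, so symmetry forces the average of $\mathbb{E}_{\nu_{\mathbf{i}^\star}^{(0,j)}}[T_{i_j^\star}^{(j)}]$ over uniform $i_j^\star$ to equal $\tau/K$. Pinsker's inequality combined with the chain rule for KL divergence, using the key observation that $\nu_{\mathbf{i}^\star}$ and $\nu_{\mathbf{i}^\star}^{(0,j)}$ agree outside of pulls of arm $i_j^\star$ during $I_j$, yields
\begin{align*}
    \mathbb{E}_{\nu_{\mathbf{i}^\star}}\bigl[T_{i_j^\star}^{(j)}\bigr] - \mathbb{E}_{\nu_{\mathbf{i}^\star}^{(0,j)}}\bigl[T_{i_j^\star}^{(j)}\bigr] \le \tau\sqrt{\tfrac{1}{2}\,\mathbb{E}_{\nu_{\mathbf{i}^\star}^{(0,j)}}[T_{i_j^\star}^{(j)}]\cdot\mathrm{kl}(1/2,1/2+\epsilon)}.
\end{align*}
Averaging over $i_j^\star$, applying Jensen inside the square root, and bounding $\mathrm{kl}(1/2,1/2+\epsilon) \le 4\epsilon^2$ for $\epsilon \le 1/4$ collapse the right-hand side to $O(\epsilon\tau\sqrt{\tau/K})$, giving $\mathbb{E}_{\mathbf{i}^\star}\mathbb{E}_{\nu_{\mathbf{i}^\star}}[T_{i_j^\star}^{(j)}] \le \tau/K + C\epsilon\tau\sqrt{\tau/K}$.

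Summing over $j$ and choosing $\epsilon$ of order $\sqrt{K/\tau}$ balances the two contributions and yields $\Omega(\sqrt{K\tau})$ regret per block, hence $\Omega(N\sqrt{K\tau}) = \Omega(\sqrt{NKT})$ in total; tracking the explicit constants through the $\mathrm{kl}$ bound and Pinsker step reproduces the specific values $M_1 = 1/\log(4/3)$ and $M_2 = 1/(24\sqrt{\log(4/3)})$ claimed in the theorem (the $\log(4/3)$ factor enters through the inequality $\mathrm{kl}(1/2,1/2+\epsilon) \le 4\epsilon^2/\log(4/3)$ at the boundary $\epsilon=1/4$). The main obstacle is conceptual rather than algebraic: one must justify that information gathered by the policy during blocks $1,\ldots,j-1$ cannot be leveraged to identify the best arm $i_j^\star$ of block $j$. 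This is dispatched precisely by comparing $\nu_{\mathbf{i}^\star}$ against the null instance $\nu_{\mathbf{i}^\star}^{(0,j)}$: its block-$j$ law is symmetric across the arms while its earlier-block law does not depend on $i_j^\star$, so only samples physically drawn from arm $i_j^\star$ inside $I_j$ contribute to the KL budget. The remaining pieces—the block decomposition of regret, Jensen in the square root, optimizing $\epsilon$, and chasing the specific constants—are routine, provided one keeps $\epsilon \le 1/4$, which is exactly what the assumption on $T$ secures.
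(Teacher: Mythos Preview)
Your approach mirrors the paper's almost exactly: the paper partitions the horizon into $N$ equal blocks, places a single Bernoulli$(\tfrac12+\epsilon)$ arm per block against a Bernoulli$(\tfrac12)$ background, invokes Lemma~A.1 of \citet{auer2002nonstochastic} (which is precisely your Pinsker-plus-chain-rule step) to control $\E_{k^\star}[N_{k^\star}^{(j)}]-\E_0[N_{k^\star}^{(j)}]$, sums over arms with Jensen, and optimizes $\epsilon$.

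Two places where your sketch drifts from a complete proof and from the paper's bookkeeping. First, you draw $i_j^\star$ uniformly from $[K]$, so with positive probability $i_j^\star=i_{j-1}^\star$ and the resulting instance has strictly fewer than $N$ piecewise-stationary segments; such instances lie outside the class whose minimax regret you are lower bounding, so the average-over-prior argument does not go through as stated. The paper handles this by drawing $i_j^\star$ uniformly from $[K]\setminus\{i_{j-1}^\star\}$, which is why the averaging denominator is $K-1$ rather than $K$, why the hypothesis $K\ge 3$ appears, and why $M_1$ carries the factor $(K-1)^2/K$. Second, your attribution of the $\log(4/3)$ constant is garbled: it does not enter via a bound of the form $\mathrm{kl}(\tfrac12,\tfrac12+\epsilon)\le 4\epsilon^2/\log(4/3)$, but via the linearization $\log\tfrac{1}{1-x}\le 4\log(4/3)\,x$ for $x\in[0,\tfrac14]$ applied with $x=4\epsilon^2$; the constraint $4\epsilon^2\le\tfrac14$ then fixes $M_1=1/\log(4/3)$, and combining the optimized bound with $(K-1)/\sqrt{K}\ge\tfrac{2}{3}\sqrt{K}$ for $K\ge 3$ yields $M_2=1/(24\sqrt{\log(4/3)})$.
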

\begin{proof}
(Sketch) The high level idea is to construct a randomized `hard' instance which is appropriate to our setting~\citep{bubeck2012regret,besbes2014stochastic,lattimore2018bandit}, then analyze its regret lower bound which holds for any exploration policy. The construction of this  `hard' instance is as follows. 
 
We partition the time horizon into $N$ segments with equal length except for the last segment. In each segment, assume the rewards of all arms are Bernoulli distributions and stay unchanged. At each time step there is an optimal arm with expected reward of $\frac{1}{2}+\epsilon$ and the remaining arms have the same expected reward of $\frac{1}{2}$. The optimal arm will change in two consecutive segments by sampling uniformly at random from the remaining $K-1$ arms.

We then use Lemma A.1 in~\citet{auer2002nonstochastic} to upper bound the expected number of pulls to any arm being optimal under change of distributions, from the sub-optimal reward distribution to the optimal reward distribution (Bern($\frac{1}{2})$ to Bern($\frac{1}{2}+\epsilon$)). Given the upper bound of expected number of pulls to the optimal arm, we can lower bound the regret for any exploration policy. By properly tuning $\epsilon$ and after some additional steps, we can derive the minimax regret lower bound. The condition $K\geq 3$ comes from the fact that the lower bound needs to be non-trivial, and $T\geq M_1 N\frac{(K-1)^2}{K}$ comes from the tuning of $\epsilon$.

For the detailed proof, please refer to Appendix~\ref{sec:proof_regret_lb}.
\end{proof}
 The conditions for this minimax lower bound are mild, since in practice the number of base arms $K$ is usually much larger and we care about the long-term regret, in other words, large $T$ regime. 
 
 Our minimax lower bound shows that \texttt{GLR-CUCB} is nearly order-optimal with respect to all parameters. On the other hand, as a byproduct, this bound also indicates that \texttt{EXP3S}~\citep{auer2002nonstochastic} and \texttt{MUCB}~\citep{cao2019nearly} are nearly order-optimal for piecewise-stationary MAB, up to poly-logarithm factors. To be more specific, \texttt{EXP3S} and \texttt{MUCB} achieve regret $\mathcal{O}(\sqrt{NKT\log{KT}})$ and $\mathcal{O}(\sqrt{NKT\log{T}})$ respectively.

\section{Experiments}
\label{sec:exp}
We compare \texttt{GLR-CUCB} with five baselines from the literature, one variant of \texttt{GLR-CUCB}, and one oracle algorithm. Specifically, \texttt{DUCB}~\citep{kocsis2006discounted} and \texttt{MUCB}~\citep{cao2019nearly} are selected from piecewice-stationary MAB literature; \texttt{CUCB}~\citep{chen2013combinatorial}, \texttt{CTS}~\citep{wang2018thompson}, and \texttt{Hybrid}~\citep{zimmert2019beating} are selected from stochastic combinatorial semi-bandit literature. The variant of \texttt{GLR-CUCB}, termed \texttt{LR-GLR-CUCB}, uses different restart strategy. Instead of restarting the estimation of all bases arms once a change-point is detected, \texttt{LR-GLR-CUCB} uses local restart strategy (only restarts the estimation of the base arms that are detected to have changes in reward distributions). For the oracle algorithm, termed \texttt{Oracle-CUCB}, we assume the algorithm knows when the optimal super arm changes and restarts \texttt{CUCB} at these change-points. Note that this is stronger than knowing the change-points, since change in distribution does not imply change in optimal super arm. Experiments are conducted on both synthetic and real-world dataset for the $m$-set bandit problems, which aims to identify the $m$ arms with highest expected reward at each time step. Equivalently, the reward function $r_{\bm{\mu}_t}(S_t)$ is the summation of the expected rewards of $m$ base arms. Since \texttt{DUCB} and \texttt{MUCB} are originally designed for piecewise-stationary MAB, to adapt them to the piecewise-stationary CMAB setting, we treat every super arm as a single arm when we run these two algorithms. Reward distributions of base arms along time are postponed to Appendix~\ref{subsec:add_res}. The details about parameter tuning for all of these algorithms for different experiments are included in Appendix~\ref{subsec:imp_detail}.
\subsection{Synthetic Dataset}
In this case we design a synthetic piecewise-stationary combinatorial semi-bandit instance as follows:
\begin{itemize}
    \item Each base arm follows Bernoulli distribution.
    \item Only one base arm changes its distribution between two consecutive piecewise-stationary segments.
    \item Every piecewise-stationary segment is of equal length.
\end{itemize}
We let $T=5000$, $K=6$, $m=2$, and $N=5$. The average regret of all algorithms are summarized in Figure~\ref{fig:synthetic}.
\begin{figure}[!ht]
    \centering
    \includegraphics[width=0.7\columnwidth]{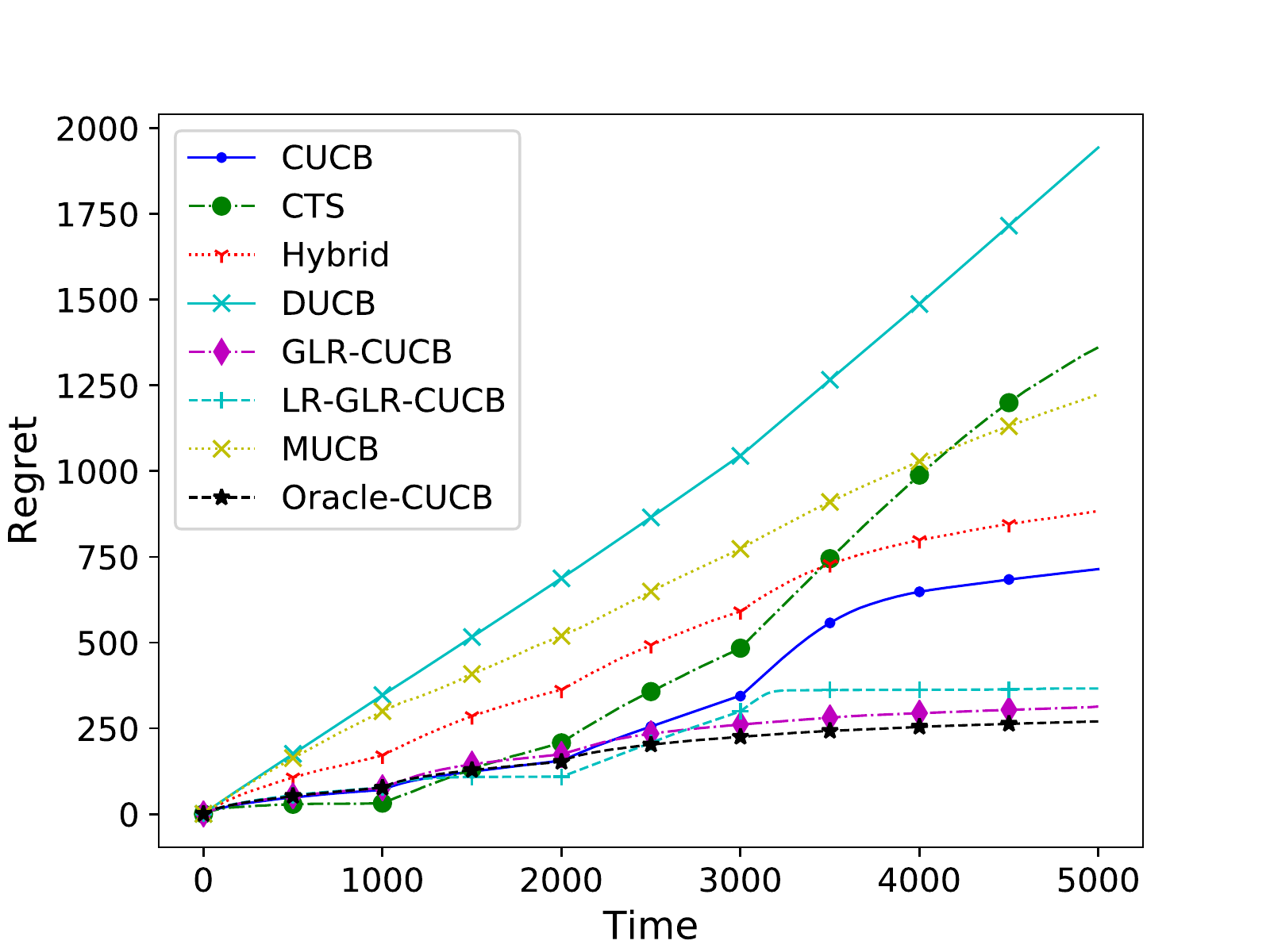}
    \caption{Expected cumulative regret for different algorithms on synthetic dataset.}
    \label{fig:synthetic}
\end{figure}
 Note that the optimal super arm does not change for the last three piecewise-stationary segments. Observe that the well-tuned GLR change-point detector is insensitive to change with small magnitude, which implicitly avoids unnecessary and costly global restart, since small change is less likely to affect the optimal super arm. Surprisingly, \texttt{GLR-CUCB} and \texttt{LR-GLR-CUCB} perform nearly as well as \texttt{Oracle-CUCB} and significantly better than other algorithms in regrets. In general, algorithms designed for stochastic CMAB outperform algorithms designed for piecewise-stationary MAB. The reason is when the horizon is small, the dimension of the action space dominates the regret, and this effect becomes more obvious when $m$ is larger. Although order-wise, the cost incurred by the change-point detection is much higher than the cost incurred by exploration.
 
 Note that our experiment on this synthetic dataset does not satisfy Assumption~\ref{assump:gap}. For example, the gap between the first segment and second segment is $0.6$, and we choose $\delta=\frac{20}{T}$ and $p=0.05\sqrt{(N-1)\log{T}/T}$ for $\texttt{GLR-CUCB}$, which means the length of the second segment should be at least 9874. However, the actual length of the second segment is only 1000. Thus our algorithm performs very well compared to other algorithms even if Assumption~\ref{assump:gap} is violated. If Assumption~\ref{assump:gap} is satisfied, \texttt{GLR-CUCB} can only perform better since it is easier to detect the change in distribution. 

\subsection{Yahoo! Dataset}
We adopt the benchmark dataset for the real-world evaluation of bandit algorithms from Yahoo!\footnote{Yahoo! Front Page Today Module User Click Log Dataset on https://webscope.sandbox.yahoo.com}. This dataset contains user click log for news articles displayed in the Featured Tab of the Today Module~\citep{li2011unbiased}. Every base arm corresponds to the click rate of one article. Upon arrival of a user, our goal is to maximize the expected number of clicked articles by presenting $m$ out of $K$ articles to the users.  

\subsubsection{\textbf{Yahoo! Experiment 1} ($K=6$, $m=2$, $N=9$).} We pre-process the dataset following~\citet{cao2019nearly}. To make the experiment nontrivial, we modify the dataset by: 1) the click rate of each base arm is enlarged by $10$ times; 2) Reducing the time horizon to $T=22500$. Results are in Figure~\ref{fig:yahoo1}.

\begin{figure}[!ht]
    \centering
    \includegraphics[width=0.7\columnwidth]{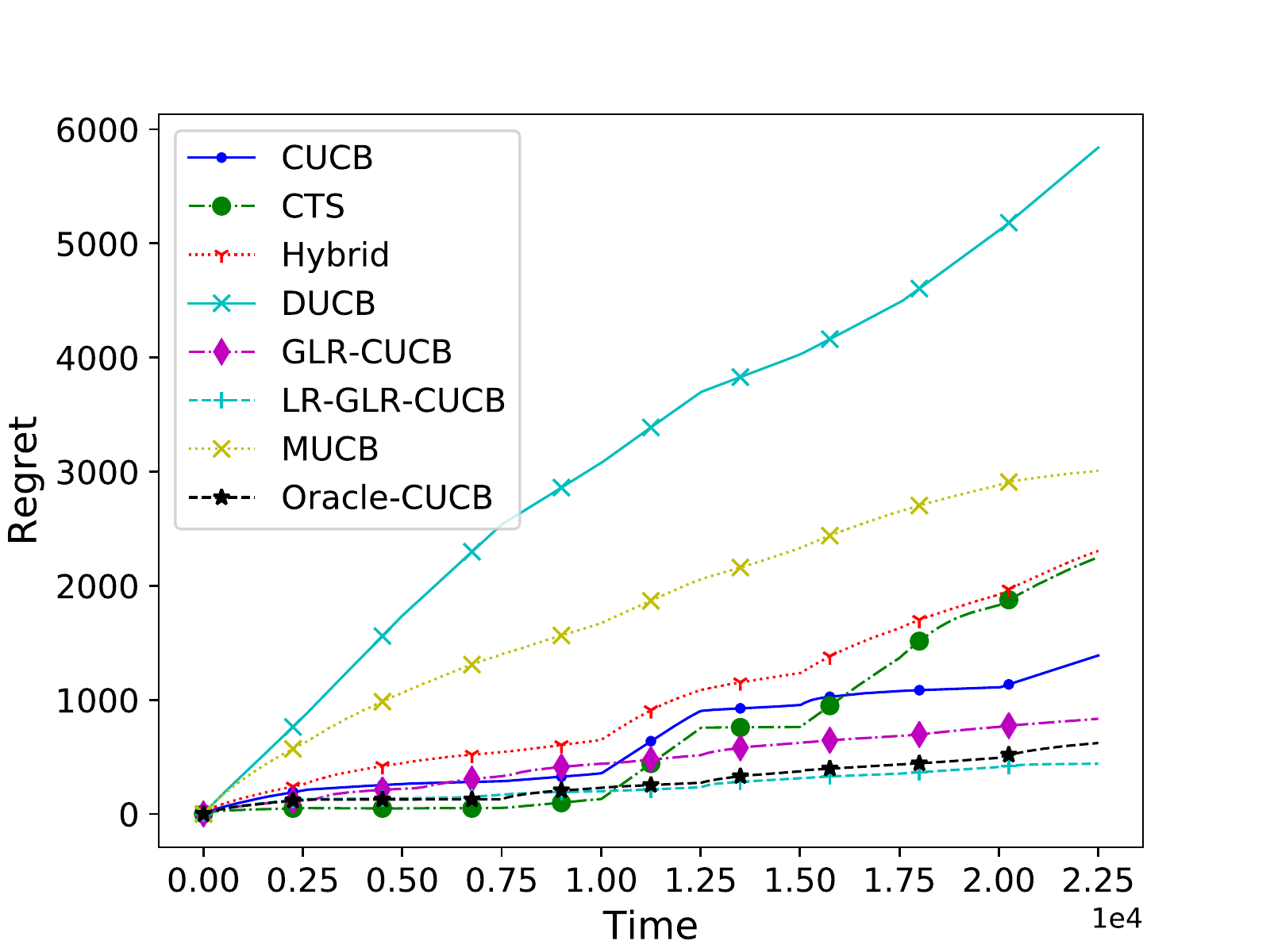}
    \caption{Expected cumulative regret for different algorithms on Yahoo! experiment 1.}
    \label{fig:yahoo1}
\end{figure}

Yahoo! experiment 1 is much harder than the synthetic problem, since it is much more non-stationary. Our experiments show \texttt{GLR-CUCB} still significantly outperforms other algorithms and only has a small gap with respect to \texttt{Oracle-CUCB}. Again, Assumption~\ref{assump:gap} does not hold for these two instances, thus we believe it is fair to compare \texttt{GLR-CUCB} with other algorithms. Unexpectedly, \texttt{LR-GLR-CUCB} performs even better than oracle-CUCB, which suggests there is still much to exploit in the piecewise-stationary bandits, since global restart has inferior performance in some cases, especially when the change in distribution is not significant. Additional experiments on Yahoo! dataset can be found in Appendix~\ref{subsec:add_res}.

\section{Conclusion and Future Work}
\label{sec:conclusion}
We have developed the first efficient and general algorithm for piecewise-stationary CMAB, termed \texttt{GLR-UCB}, which extends \texttt{CUCB}~\citep{chen2013combinatorial}, by incorporating a GLR change-point detector. We analyze the regret upper bound of \texttt{GLR-CUCB} on the order of $\mathcal{O}(\sqrt{NKT\log T})$, and prove the minimax lower bound for piecewise-stationary MAB and CMAB on the order of $\Omega(\sqrt{NKT})$, which shows our algorithm is nearly order-optimal within poly-logarithm factors. Experimental results show our proposed algorithm outperforms other state-of-the art algorithms. 

Future work includes designing algorithms for piecewise-stationary CMAB with better restart strategy. \texttt{GLR-CUCB} restarts whenever the GLR change-point detector declares the reward distribution of one base arm changes, but this restart is very likely unnecessary, because change-point with small magnitude might not change the optimal superarm. Another very challenging unsolved problem is whether one can close the gap between the regret upper bound and the minimax regret lower bound. Specifically, develop algorithm which is order-optimal for piecewise-stationary CMAB. It is also important to study other non-stationary settings for combinatorial semi-bandits, such as the dynamic case (with bounded total variation throughout time)~\citep{chen2020nonstationary}.

\bibliographystyle{plainnat}
\bibliography{refs.bib}

\label{sec:apdx}
\newpage
\onecolumn
\appendix
\begin{center}
\textbf{\Large Appendices}
\end{center}

\section{Detailed Proofs of Theorem~\ref{thm:regret_ub}}
\label{sec:proof_regret_ub}
\subsection{Proof of Lemma~\ref{lem:st_reg}}
\begin{proof}
Define $N_k(t)$ as a counter for each base arm $k\in\mathcal{K}$ at time $t$ (note that it is different from the counter $n_k$ defined in the algorithm) and update the counters in each round as follows: (1) After the $K$ initialization rounds, set $N_k(K)=1, \forall k\in\mathcal{K}$. (2) For a round $t>K$, if $S_t$ is bad, then increase $N_{k^*}(t)$ by one, where $k^*=\arg\min_{k\in\mathcal{K}}N_{k}(t-1)$.
By definition,  the total number of bad rounds at time $T$ is no more than $\sum_{k\in\mathcal{K}}N_k(T)$. Throughout  the proof, we will use $n_k(t)$ to denote the total number of times arm $k$ is played by the agent till time $t$, to emphasize the time dependency in order to make the proof more readable.\\
Note that when a bad super arm is played, it incurs loss at most $\Delta^{\text{max}, 1}_{\textup{opt}}$. Thus,
\begin{align*}
     \mathcal{R}(T)&\leq \Delta^{\text{max},1}_{\textup{opt}}\mathbb{E}\left[\sum_{k\in\mathcal{K}}N_k(T)\right].
\end{align*}
Thus it suffices to upper bound $\mathbb{E}\left[\sum_{k\in\mathcal{K}}N_k(T)\right]$ to upper bound the cumulative regret. Let $l_t=\frac{6KL^2\log t }{\left(\Delta^{\text{min}, 1}_{\textup{opt}}\right)^2}$, we have
\begin{align*}
    & \sum_{k\in\mathcal{K}}N_k(T) - Kl_T=\sum_{k\in\mathcal{K}}N_k(T)\indfunc{\{\tau_1\leq T\}} + \sum_{k\in\mathcal{K}}N_k(T)\indfunc{\{\tau_1\geq T\}} - K(l_T+1)\\
    &=\sum_{k\in\mathcal{K}}N_k(T)\indfunc\left\{\tau_1\leq T\right\}+ \sum_{t=K+1}^{T}\indfunc\left\{S_t\in\mathcal{S}_B^1, \tau_1\geq T\right\} - Kl_T\\
    &= \sum_{k\in\mathcal{K}}N_k(T)\indfunc\left\{\tau_1\leq T\right\}+\sum_{t=K+1}^{T}\indfunc\left\{S_t\in\mathcal{S}_B^1, \tau_1\geq T, t\;\mathrm{mod}\; \left\lfloor \frac{K}{p}\right\rfloor \in \mK\right\}+\\
    &\quad\sum_{t=K+1}^{T}\indfunc\left\{S_t\in\mathcal{S}_B^1, \tau_1\geq T,  t\;\mathrm{mod}\; \left\lfloor \frac{K}{p}\right\rfloor \not\in \mK\right\} - Kl_T\\
    &\leq T\indfunc\left\{\tau_1\leq T\right\}+ \sum_{t=K+1}^{T}\indfunc\left\{ t\;\mathrm{mod}\; \left\lfloor \frac{K}{p}\right\rfloor \in \mK\right\} + \sum_{t=K+1}^{T}\sum_{k\in\mathcal{K}}\indfunc\left\{S_t\in\mathcal{S}_B^1, N_k(t)>N_k(t-1), N_k(t-1)>l_T\right\}\\
   &\leq T\indfunc\left\{\tau_1\leq T\right\}+\sum_{t=K+1}^{T}\indfunc\left\{ t\;\mathrm{mod}\; \left\lfloor \frac{K}{p}\right\rfloor \in \mK\right\} + \sum_{t=K+1}^{T}\sum_{k\in\mathcal{K}}\indfunc\left\{S_t\in\mathcal{S}_B^1, N_k(t)>N_k(t-1), N_k(t-1)>l_t\right\}\\
   &=  T\indfunc\left\{\tau_1\leq T\right\}+\sum_{t=K+1}^{T} \indfunc\left\{ t\;\mathrm{mod}\; \left\lfloor \frac{K}{p}\right\rfloor \in \mK\right\} + \sum_{t=K+1}^{T}\indfunc\left\{S_t\in\mathcal{S}_B^1, \forall k\in S_t, N_k(t-1)>l_t\right\}\\
   &\leq T\indfunc\left\{\tau_1\leq T\right\}+\sum_{t=K+1}^{T} \indfunc\left\{ t\;\mathrm{mod}\; \left\lfloor \frac{K}{p}\right\rfloor \in \mK\right\} + \sum_{t=K+1}^{T}\indfunc\left\{S_t\in\mathcal{S}_B^1, \forall k\in S_t, n_k(t-1)>l_t\right\}.
\end{align*}
The next step is to show $\mathbb{P}\left(S_t\in\mathcal{S}_B^1, \forall k\in S_t, n_k(t-1)>l_t\right)\leq \frac{2K}{t^2},\forall t\in[T]$. Let $n_k(t)$ be the number of times arm $k$ is played in the first $t$ rounds, $\hat{\mu}^1_{k, m}$ be the empirical mean of $m$ samples of $k$th arm at the first piecewise-stationary segment, $\text{UCB}_k(t)$ be the UCB index of $k$th arm at time $t$, and $\mu_k^1$ be the actual mean of $k$th arm during the first piecewise-stationary segment. For any $k\in\mathcal{K}$, by applying the Hoeffding's inequality, we have
\begin{align*}
    \mathbb{P}\left(\hat{\mu}_{k, n_k(t-1)}^1-\mu_k^1\geq\sqrt{\frac{3\log {t}}{2n_k(t-1)}}\right)\leq \sum_{m=1}^{t-1}\mathbb{P}\left(\hat{\mu}^1_{k, 1:m}-\mu_k^1\geq\sqrt{\frac{3\log {t}}{2m}}\right)\leq t e^{-3\log t}=t^{-2}.
\end{align*}
Define the event $E_t=\left\{\forall k\in\mathcal{K}, \hat{\mu}^1_{k, n_k(t-1)}-\mu_k^1\leq\sqrt{\frac{3\log {t}}{2n_k(t-1)}}\right\}$. By the union bound we have $\mathbb{P}\left(\overline{E}_t\right)\leq Kt^{-2}$. However, we can show that $E_t\cap\left\{S_t\in\mathcal{S}_B^1, \forall k\in S_t, n_k(t-1)>l_t\right\}=\emptyset$. In other words, these two events are mutually exclusive. The reason is if both of these events hold, we have
\begin{align*}
    r_{\pmb{\mu}^1}(S_t) + L\sqrt{K\frac{6\log t}{l_t}}\overset{(a)}{>} r_{\pmb{\mu}^1}(S_t)+L\sqrt{K\max_{k\in S_t}\frac{6\log t}{n_k(t-1)}} \overset{(b)}{\geq} r_{\text{UCB}}(S_t)\overset{(c)}{\geq} \alpha\max_{\tilde{S}\in\mathcal{F}}r_{\text{UCB}}(\tilde{S})\overset{(d)}{\geq} \alpha\max_{\tilde{S}\in\mathcal{F}}r_{\pmb{\mu}^1}(\tilde{S}),
\end{align*}
where $\pmb{\mu}^1$ is the mean vector of the base arms at the first piecewise-stationary segment. As for these above inequalities, (a) holds since $n_k(t-1)>l_t, \forall k\in S_t$; (b) holds by the $L$-Lipschitz property of $r_{\mu}(\cdot)$ and $\left\lVert\text{UCB}-\pmb{\mu}^1\right\rVert\leq \sqrt{K\max_{k\in S_t}\frac{6\log t}{n_k(t-1)}}$. Note that the upper bound of the norm of vector difference comes from the fact that $|\hat{\mu}_{k, n_k(t-1)}^1-\mu_k^1|\leq\sqrt{\frac{3\log t}{2n_k(t-1)}}$, and $\text{UCB}_k(t-1) = \hat{\mu}^1_{k, n_k(t-1)} + \sqrt{\frac{3\log {t}}{2n_k(t-1)}}$; (c) holds by the definition of $\alpha$-approximation oracle; (d) holds by the monotone property of $r_{\mu}(\cdot)$ and $\text{UCB}\geq\pmb{\mu}^1$. However this contradicts the fact that $ r_{\pmb{\mu}^1}(S_t) + L\sqrt{K\frac{6\log t}{l_t}}\leq \alpha\max_{\tilde{S}\in\mathcal{F}}r_{\pmb{\mu}^1}(\tilde{S})$, since $L\sqrt{K\frac{6\log t}{l_t}}=\Delta^{\text{min},1}_{\textup{opt}}$, which implies that these two events are mutually exclusive.
Thus, 
\begin{align*}
\mathbb{P}\left(S_t\in\mathcal{S}_B, \forall k\in S_t, n_k(t-1)>l_t\right)\leq \mathbb{P}\left(\overline{E}_t\right)\leq 2Kt^{-2}.
\end{align*}
To sum up, 
\begin{align*}
    \mathbb{E}\left[\sum_{k\in\mathcal{K}}N_k(T)\right] &\leq K(l_T+1) + p(T-K) + \sum_{t=K+1}^{T}\mathbb{P}\left(S_t\in\mathcal{S}_B^1, \forall k\in S_t, n_k(t-1)>l_t\right)+T\mathbb{P}(\tau_1\leq T)\\
    &\leq K(l_T+1) + pT + \sum_{t=1}^{T}\frac{K}{t^2} + T\mathbb{P}(\tau_1\leq T)\leq \frac{6L^2K^2\log T}{(\Delta^{\text{min}, 1}_{\textup{opt}})^2} + pT + \left(\frac{\pi^2}{6}+1\right)K + T\mathbb{P}(\tau_1\leq T).
\end{align*}
The proof is done. 
\end{proof}

\subsection{Proof of Lemma~\ref{lem:false_alarm}}
\begin{proof}
Define $\tau_{k,1}$ as the first change-point detection time of the $k$th base arm, and then $\tau_1=\min_{k\in\mK}\tau_{k,1}$ as $\texttt{GLR_CUCB}$ restarts the whole algorithm if change-point is detected on any of the base arms. Applying the union bound to the false alarm probability $\Prob\left(\tau_1\le T\right)$, we have that
\begin{equation*}
    \Prob\left(\tau_1\le T\right)\le\sum_{k=1}^K\Prob\left(\tau_{k,1}\le\tau\right).
\end{equation*}
Recall the GLR statistic defined in Eq.~\eqref{eq:GLR_statistic}, and substitute it into $\Prob\left(\tau_{k,1}\le T\right)$, 
\begin{align*}
    \Prob\left(\tau_{k,1}\le T\right)&\le\Prob\left[\exists n\le n_k~\mbox{and}~ s<n:s\times\text{kl}\left(\hat{\mu}^1_{k,1:s},\hat{\mu}^1_{k,1:n}\right)+(n-s)\times\text{kl}\left(\hat{\mu}^1_{k,s+1:n},\hat{\mu}^1_{k,1:n}\right)>\beta(n,\delta)\right]\\
    &\overset{(a)}{\le}\Prob\left[\exists n\le T~\mbox{and}~ s<n:s\times\text{kl}\left(\hat{\mu}^1_{k,1:s},\hat{\mu}^1_{k,1:n}\right)+(n-s)\times\text{kl}\left(\hat{\mu}^1_{k,s+1:n},\hat{\mu}^1_{k,1:n}\right)>\beta(n,\delta)\right]\\
    &\overset{(b)}{\le}\Prob\left[\exists n\le T~\mbox{and}~ s<n:s\times\text{kl}\left(\hat{\mu}^1_{k,1:s},\mu_k^1\right)+(n-s)\times\text{kl}\left(\hat{\mu}^1_{k,s+1:n},\mu^1_k\right)>\beta(n,\delta)\right]\\
    &\overset{(c)}{\le}\sum_{s=1}^T\Prob\left[\exists s<n:s\times\text{kl}\left(\hat{\mu}^1_{k,1:s},\mu_k^1\right)+(n-s)\times\text{kl}\left(\hat{\mu}^1_{k,s+1:n},\mu^1_k\right)>\beta(n,\delta)\right]\\
    &\le\sum_{s=1}^T\Prob\left[\exists r\in\N:s\times\text{kl}\left(\hat{\mu}^1_{k,1:s},\mu_k^1\right)+r\times\text{kl}\left(\hat{\mu}^1_{k,s+1:s+r},\mu^1_k\right)>\beta(s+r,\delta)\right]\\
    &\overset{(d)}{\le}\sum_{s=1}^T\frac{\delta}{3s^{3/2}}
    \overset{(e)}{\le}\sum_{s=1}^\infty\frac{\delta}{3s^{3/2}}
    \le\delta,
\end{align*}
where $\hat{\mu}_{s,s'}^1$ is the mean of the rewards generated from the distribution $f_k^1$ with expected reward $\mu_k^1$ from time step $s$ to $s'$ as only stationary scenario is considered here. Here, inequality (a) is because of the fact that $n_k\le T$; inequality (b) holds since:
\begin{equation*}
    s\times\text{kl}\left(\hat{\mu}_{1:s},\hat{\mu}_{1:n}\right)+(n-s)\times\text{kl}\left(\hat{\mu}_{s+1:n},\hat{\mu}_{1:n}\right)=\inf_{\lambda\in[0,1]}\left[s\times\text{kl}\left(\hat{\mu}_{1:s},\lambda\right)+(n-s)\times\text{kl}\left(\hat{\mu}_{s+1:n},\lambda\right)\right];
\end{equation*}
inequality (c) is because of the union bound; inequality (d) is according to the Lemma 10 in~\citet{besson2019generalized}; inequality (e) holds due to the Riemann zeta function $\zeta(s)$, when $s=1.5$, $\zeta(s)<2.7$. Summing over $k\in\mK$, we conclude that $\Prob\left(\tau_1\le T\right)\le K\delta$.
\end{proof}

\subsection{Proof of Corollary~\ref{col:exp_delay}}
\begin{proof}
 By the definition of $\mC^{(i)}$, it follows directly that the conditional expected detection delay is upper bounded by $d_i$.
\end{proof}
\subsection{Proof of Theorem~\ref{thm:regret_ub}}
\begin{proof}
Define the good event $F_i=\{\tau_i>\nu_i\}$ and good event $D_i=\{\tau_i\le\nu_i+d_i\}$, $\forall 1\le i\le N-1$. Recall the definition of the good event $\mC^{(i)}$ that all the change-points up to $i$th one have been detected successfully and efficiently in Eq.~\eqref{eq:C_event}, and we can find that $\mC^{(i)}=F_1\cap D_1\cap\cdots \cap F_i\cap D_i$ is the intersection of the event sequence of $F_i$ and $D_i$ up to the $i$th change-point. By first decomposing the expected $\alpha$-approximation cumulative regret with respect to the event $F_1$, we have that
\begin{align*}
    \mR(T)=\E\left[R(T)\right]&=\E\left[R(T)\indfunc{\{F_1\}}\right]+\E\left[R(T)\indfunc{\{\overline{F}_1\}}\right]\\
    &\le\E\left[R(T)\indfunc{\{F_1\}}\right]+T\Delta^{\text{max}}_{\text{opt}}\Prob(\overline{F}_1)\\
    &\le\E\left[R(\nu_1)\indfunc{\{F_1\}}\right]+\E\left[R(T-\nu_1)\right]+T\Delta^{\text{max}}_{\text{opt}}K\delta\\
    &\overset{(a)}{\le}\widetilde{C}_1+\Delta^{\text{max},1}_{\text{opt}}\nu_1 p+\E\left[R(T-\nu_1)\right]+T\Delta^{\text{max}}_{\text{opt}}K\delta,
\end{align*}
where $R(t') = \alpha\sum_{t=1}^{t'}\max_{S\in\mF}r_{\bm{\mu}_t}(S)-\sum_{t=1}^{t'}r_{\bm{\mu}_t}(S_{t})$ and $R(t''-t') = \alpha\sum_{t=t'+1}^{t''}\max_{S\in\mF}r_{\bm{\mu}_t}(S)-\sum_{t=t'+1}^{t''}r_{\bm{\mu}_t}(S_{t})$. Here, $\Prob(\overline{F}_1)$ enters the equation by Lemma~\ref{lem:false_alarm}, and inequality $(a)$ is due to the bound in the Lemma~\ref{lem:st_reg}.

Next, we turn to bounding $\E\left[R(T-\nu_1)\right]$. By the law of total expectation, we have
\begin{align*}
    \E\left[R(T-\nu_1)\right]&\le\E\left[R(T-\nu_1)\bigm|F_1\cap D_1\right]+T\Delta^{\text{max}}_{\text{opt}}(1-\Prob(F_1\cap D_1))\\
    & = \E\left[R(T-\nu_1)\bigm|F_1\cap D_1\right]+T\Delta^{\text{max}}_{\text{opt}}\Prob(\overline{F}_1\cup \overline{D}_1)\\
    &\le\E\left[R(T-\nu_1)\bigm|F_1\cap D_1\right]+T\Delta^{\text{max}}_{\text{opt}}(K+1)\delta,
\end{align*}
where the probability $\Prob(\overline{F}_1\cup \overline{D}_1)$ is from applying the union bound to Lemma~\ref{lem:cond_prob}. Furthermore, notice that
\begin{align*}
    \E\left[R(T-\nu_1)\bigm|F_1\cap D_1\right]&=\E\left[R(T-\nu_1)\bigm|\mC^{(1)}\right]\\
    &\le\E\left[R(T-\tau_1)\bigm|\mC^{(1)}\right]+E\left[R(\tau_1-\nu_1)\bigm|\mC^{(1)}\right]\\
    &\le\E\left[R(T-\nu_1)\bigm|\mC^{(1)}\right]+\Delta^{\text{max},2}_{\text{opt}}d_1,
\end{align*}
where $\E\left[R(\tau_1-\nu_1)\bigm|\mC^{(1)}\right]$ is upper-bounded according to Corollary~\ref{col:exp_delay}. 
By combining the previous steps, we have that 
\begin{align*}
  \mR(T)\le\E\left[R(T-\nu_1)\bigm|\mC^{(1)}\right]+\widetilde{C}_1+\Delta^{\text{max},1}_{\text{opt}}\nu_1p+\Delta^{\text{max},2}_{\text{opt}}d_1+3T\Delta^{\text{max}}_{\text{opt}}K
\delta.  
\end{align*}

Similarly,
\begin{align*}
  \E\left[R(T-\nu_1)\bigm|\mC^{(1)}\right]&\le\E\left[R(T-\nu_1)\indfunc{\{F_2\}}\bigm|\mC^{(1)}\right]+T\Delta^{\text{max}}_{\text{opt}}\Prob\left(\overline{F}_2\bigm|\mC^{(1)}\right)\\
  &\le\E\left[R(\nu_2-\nu_1)\indfunc{\{F_2\}}\bigm|C^{(1)}\right]+\E\left[R(T-\nu_2)\bigm|\mC^{(1)}\right]+T\Delta^{\text{max}}_{\text{opt}}K\delta\\
  &\le \widetilde{C}_2+\Delta^{\text{max},2}_{\text{opt}}(\nu_2-\nu_1)p+\E\left[R(T-\nu_2)\bigm|\mC^{(1)}\right]+T\Delta^{\text{max}}_{\text{opt}}K\delta,
\end{align*}
where $\Prob\left(\overline{F}_2\bigm|\mC^{(1)}\right)$ directly follows Lemma~\ref{lem:cond_prob}. For the term $\E\left[R(T-\nu_2)\bigm|\mC^{(1)}\right]$, we further decompose is as follows,
\begin{align*}
\E\left[R(T-\nu_2)\bigm|\mC^{(1)}\right]&\le\E\left[R(T-\nu_2)\bigm|F_2\cap D_2\cap \mC^{(1)}\right]+T\Delta^{\text{max}}_{\text{opt}}\left(1-\Prob\left(F_2\cap D_2\bigm|\mC^{(1)}\right)\right)\\
& = \E\left[R(T-\nu_2)\bigm|F_2\cap D_2\cap \mC^{(1)}\right]+T\Delta^{\text{max}}_{\text{opt}}\Prob\left(\overline{F}_2\cup \overline{D}_2\bigm|\mC^{(1)}\right)\\
&\le E\left[R(T-\nu_2)\bigm|\mC^{(2)}\right]+T\Delta^{\text{max}}_{\text{opt}}(K+1)\delta,
\end{align*}
where $\Prob\left(\overline{F}_2\cup\overline{D}_2\bigm|\mC^{(1)}\right)$ follows Lemma~\ref{lem:cond_prob} by applying the union bound.
For $\E[R(T-\nu_2)|\mC^{(2)}]$, we have
\begin{align*}
    \E\left[R(T-\nu_2)\bigm|\mC^{(2)}\right]&\le\E\left[R(T-\tau_2)|\mC^{(2)}\right]+\E\left[R(\tau_2-\nu_2)\bigm|\mC^{(2)}\right]\\
    &\le\E\left[R(T-\nu_2)\bigm|\mC^{(2)}\right]+\Delta^{\text{max},3}_{\text{opt}}\E\left[\tau_2-\nu_2\bigm|\mC^{(2)}\right]\\
    & \le\E[R(T-\nu_2)|\mC^{(2)}]+\Delta^{\text{max},3}_{\text{opt}}d_2.
\end{align*}
Wrapping up previous steps, we have that 
\begin{align*}
  \mR(T)\le\E\left[R(T-\nu_2)\bigm|\mC^{(2)}\right]+\widetilde{C}_1+\widetilde{C}_2+\Delta^{\text{max},1}_{\text{opt}}\nu_1p+\Delta^{\text{max},2}_{\text{opt}}(\nu_2-\nu_1)p+\Delta^{\text{max},2}_{\text{opt}}d_1+\Delta^{\text{max},3}_{\text{opt}}d_2+6T\Delta^{\text{max}}_{\text{opt}}K\delta.  
\end{align*}
 Recursively, we can bound $E\left[R(T-\nu_i)\bigm|\mC^{(i)}\right]$ by applying the same method as before, and the upper bound on the regret of $\texttt{GLR-CUCB}$ is
\begin{equation*}
    \mR(T)\le\sum_{i=1}^N \widetilde{C}_i+\Delta^{\text{max}}_{\text{opt}}Tp+\sum_{i=1}^{N-1}\Delta^{\text{max},i+1}_{\text{opt}}d_i+3NT\Delta^{\text{max}}_{\text{opt}}K\delta.
\end{equation*}
\end{proof}

\subsection{Proof of Corollary~\ref{col:regret_tuning}}
\begin{proof}
Notice that
\begin{align*}
    d_i&=\left\lceil\frac{4K}{p(\Delta_{\textup{change}}^i)^2}\beta(T,\delta)+\frac{K}{p}\right\rceil\le\frac{4K}{p(\Delta_{\text{change}}^i)^2\beta(T,\delta)}+\frac{2K}{p}\\
    &\overset{(a)}{\le}\frac{4K}{p(\Delta_{\text{change}}^i)^2}\left[\log(\frac{3T^{3/2}}{\delta})+8\log\left(1+\frac{\log(\frac{3T^{3/2}}{\delta})}{2}+\sqrt{\log(\frac{3T^{3/2}}{\delta}})\right)+6\log(1+\log{T})\right]+\frac{2K}{p}\\
    &\overset{(b)}{\le}\frac{\frac{20K\log{T}+o(K\log{T})}{(\Delta_{\text{change}}^i)^2}+2K}{p}\lesssim \frac{K\log{T}}{p(\Delta_{\text{change}}^{\text{min}})^2},
\end{align*}
where inequalities $(a)$ and $(b)$ hold when $\log(3T^{5/2})\ge 10$ (equals to $T\ge 36$). 
By plugging $d_i$ into Theorem~\ref{thm:regret_ub}, we have,
\begin{equation*}
    \mR(T)\lesssim\frac{NK^2\log{T}\Delta^{\text{max}}_{\textup{opt}}}{\left(\Delta^{\text{min}}_{\textup{opt}}\right)^2}+\Delta^{\text{max}}_{\text{opt}}Tp+\frac{N\Delta^{\text{max}}_{\text{opt}}K\log{T}}{p(\Delta_{\text{change}}^{\text{min}})^2}+3NK\Delta^{\text{max}}_{\text{opt}}.
\end{equation*}
Combining the above analysis we conclude the corollary.
\end{proof}

\section{Proof of Theorem~\ref{thm:minimax_lb}}
\label{sec:proof_regret_lb}
\begin{proof}
Consider a randomized hard instance defined as follows. First partition the decision horizon into $N$ segments, where the first $N-1$ segments are of size $l=\lceil \frac{T}{N}\rceil$, and the last segment's length is $T-(N-1)l$. Denote these $N$ segments as $\mathcal{B}_1, \mathcal{B}_2, ..., \mathcal{B}_N$. In each segment, assume the rewards of all arms follow Bernoulli distribution. Furthermore, we assume the following
\begin{itemize}
    \item $\mu_k^{t_1}=\mu_k^{t_2}$, $\forall k\in\mathcal{K}$ and $t_1, t_2\in \mathcal{B}_i$ for some $i\in[N]$ (Here, with a little inconsistency with the previous notation, $\mu_k^t$ is the mean of the reward of arm $k$ at time $t$). 
    \item $\mu_k^t\in\{\frac{1}{2}, \frac{1}{2}+\epsilon\}$, $\forall k\in\mathcal{K}$ and $\forall t\in\mathcal{T}$, where $\epsilon$ is a parameter that needs to be tuned later. 
    \item $\sum_{i=1}^{K}\mu_i^t=\frac{K}{2}+\epsilon$, $\forall t\in\mathcal{T}$.
\end{itemize}
Basically at each time instant there is an optimal arm with expected reward $\frac{1}{2}+\epsilon$, and the rest of the base arms have the same reward $\frac{1}{2}$. In addition, in each segment, the reward distribution of all arms stay unchanged. Lastly, we need to specify how the reward distributions change in the consecutive blocks. In $B_i$ if the optimal arm is $k_i^*$, then the optimal arm in $B_{i+1}$ will be drawn uniformly at random from $\mathcal{K}\setminus k_i^*$. 

Now we finish setting up the randomized piecewise-stationary bandit instance. To prove the theorem, we need some more notations. Fix a policy $\pi$, and fix a segment $i\in[N]$, let $k_i^*$ denote the best arm in that segment. We denote by $P_{k_i^*}^i$ the probability distribution conditioned on arm $k_i^*$ being the best arm in segment $i$, and by $P_0$ the probability distribution with respect to non-optimal rewards for each arm, i.e. with expected reward $\frac{1}{2}$. Furthermore, we denote by $\E_{k_i^*}^i[\cdot]$ and $\E_0[\cdot]$ the respective expectations. Lastly, $N_k^i$ is the number of times arm $k$ was selected in segment $i$ and $X\in\{0, 1\}^T$ is the binary reward sequence. To prove the theorem, we need Lemma A.1 in~\citet{auer2002nonstochastic}, which we state for completeness. 
\begin{lemma}
(Lemma A.1 in~\citet{auer2002nonstochastic}) Let $f:\{0,1\}^T\rightarrow[0, M]$ be any bounded function defined on reward sequence $X$, then $\forall k\in\mathcal{K}$,
\begin{align*}
    \E_k^i[f(X)]\leq \E_0[f(X)]+\frac{T}{2}\sqrt{D_{KL}(P_0||P_k^i)\E_0[N_{k_i^*}^i]}.
\end{align*}
\end{lemma}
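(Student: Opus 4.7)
The plan is to prove the inequality by the classical three-step change-of-measure argument: convert an expectation gap into total variation, apply Pinsker's inequality, and collapse the joint KL divergence via the chain rule, exploiting the fact that $P_0$ and $P_k^i$ differ only on arm $k_i^*$ during segment $i$.

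\textbf{Step 1 (total-variation bound).} First I would use that, for any $f:\{0,1\}^T\to[0,M]$ and any two probability laws $P,Q$ on $\{0,1\}^T$, one has $\E_P[f]-\E_Q[f]\le M\cdot TV(P,Q)$, where $TV(P,Q)=\tfrac{1}{2}\|P-Q\|_1$ is the total-variation distance (this is the dual characterization of TV). Applied with $P=P_k^i$ and $Q=P_0$ this gives
\begin{align*}
\E_k^i[f(X)] - \E_0[f(X)] \le M \cdot TV(P_0, P_k^i).
\end{align*}

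\textbf{Step 2 (Pinsker's inequality).} Then I would bound the total variation by the KL divergence between the joint laws on the reward sequence, $TV(P_0,P_k^i)\le \sqrt{\tfrac{1}{2}\, D_{KL}^{\text{joint}}(P_0\|P_k^i)}$.

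\textbf{Step 3 (chain rule for KL).} Let $A_t$ denote the arm pulled at time $t$ and $H_{t-1}$ the history before time $t$. The same (possibly history-dependent) policy governs the law of $A_t\mid H_{t-1}$ under both $P_0$ and $P_k^i$, and the reward kernels agree on every arm/segment pair except arm $k_i^*$ during segment $\mathcal{B}_i$, where $P_0$ uses $\text{Bern}(\tfrac{1}{2})$ and $P_k^i$ uses $\text{Bern}(\tfrac{1}{2}+\epsilon)$. The chain rule for KL divergence therefore yields
\begin{align*}
D_{KL}^{\text{joint}}(P_0\|P_k^i) = \sum_{t=1}^T \E_0\!\left[D_{KL}\!\left(P_0(X_t\mid H_{t-1}) \,\|\, P_k^i(X_t\mid H_{t-1})\right)\right] = \E_0[N_{k_i^*}^i]\cdot d,
\end{align*}
where $d = D_{KL}(\text{Bern}(\tfrac{1}{2})\|\text{Bern}(\tfrac{1}{2}+\epsilon))$ and contributions from time steps or arms on which the two kernels agree are identically zero. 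Identifying $d$ with the per-round $D_{KL}(P_0\|P_k^i)$ in the lemma statement and $M$ with $T$ (its intended value when $f$ counts arm pulls), the three steps combine to yield the claimed inequality up to the absolute constant in front of the square root.

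\textbf{Main obstacle.} The delicate step is Step 3: I have to justify, under any history-dependent policy, that the KL contribution at each time $t$ is exactly zero whenever either $t\notin\mathcal{B}_i$ or $A_t\neq k_i^*$, and equals the single-round Bernoulli KL $d$ otherwise. This requires observing that $A_t$ is produced by a policy that is an identical function of $H_{t-1}$ (and exogenous policy randomness independent of the reward kernels) under both measures, so the action-selection factors cancel in the log-likelihood ratio; and that the event $\{A_t=k_i^*\}$ is $H_{t-1}$-measurable so that summing and taking the $P_0$-expectation telescopes into $\E_0[N_{k_i^*}^i]$. Everything else is a mechanical application of Pinsker's inequality.
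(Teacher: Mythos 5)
The paper does not actually prove this lemma: it is imported verbatim (modulo notation) from Lemma A.1 of Auer et al. (2002) and stated ``for completeness,'' so there is no in-paper proof to compare against. Your three-step argument --- the dual bound $\E_P[f]-\E_Q[f]\le M\cdot TV(P,Q)$, Pinsker's inequality, and the divergence chain rule under a history-measurable policy --- is exactly the standard proof of that cited lemma, and it is correct; in particular you correctly isolate the one nontrivial point, namely that the action-selection factors cancel in the likelihood ratio because the policy is the same function of the history (and of exogenous randomness) under both measures, and that $\{A_t=k_i^*\}$ being $H_{t-1}$-measurable lets the per-round KL terms telescope to $\E_0[N_{k_i^*}^i]\,d$ with $d=D_{KL}(\mathrm{Bern}(\tfrac12)\,\|\,\mathrm{Bern}(\tfrac12+\epsilon))=-\tfrac12\log(1-4\epsilon^2)$. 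Your resulting constant $\tfrac{M}{\sqrt{2}}\sqrt{d\,\E_0[N_{k_i^*}^i]}$ agrees exactly with the form the paper actually uses downstream, $\tfrac{|\mathcal{B}_i|}{2}\sqrt{\log\tfrac{1}{1-4\epsilon^2}\,\E_0[N_{k_i^*}^i]}$, once one reads the paper's $D_{KL}(P_0\|P_k^i)$ as $\log\tfrac{1}{1-4\epsilon^2}=2d$ and the $T$ in the displayed lemma as the range bound $M$ on $f$ (the paper applies it with $M=|\mathcal{B}_i|$); these are transcription slips in the paper's statement, not gaps in your argument.
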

Applying the above lemma with $f(X)=N_{k_i^*}^i$, we have
\begin{align*}
    \E_{k_i^*}^i[N_{k_i^*}^i]\leq \E_0[N_{k_i^*}^i]+\frac{|\mathcal{B}_i|}{2}\sqrt{\log{\frac{1}{1-4\epsilon^2}}\E_0[N_{k_i^*}^i]}.
\end{align*}
Summing over all arms, we have
\begin{align}
    \sum_{k_i^*=1}^{K}\E_{k_i^*}^i[N_{k_i^*}^i]&\leq \sum_{k_i^*=1}^{K}\E_0[N_{k_i^*}^i] + \sum_{k_i^*=1}^{K}\frac{|\mathcal{B}_i|}{2}\sqrt{\log{\frac{1}{1-4\epsilon^2}}\E_0[N_{k_i^*}^i]}\notag\\
    &\leq |\mathcal{B}_i|+\frac{|\mathcal{B}_i|}{2}\sqrt{|\mathcal{B}_i|K\log{\frac{1}{1-4\epsilon^2}}},\label{num_ub}
\end{align}
where the last equality holds since $\sum_{k_i^*=1}^{K}\E_0[N_{k_i^*}^i]=|\mathcal{B}_i|$ and Jensen's inequality. Then we can lower bound the regret for any policy $\pi$ for this randomized hard instance. Let $\mathcal{R}^{\pi}_T$ be the regret of policy $\pi$ with horizon $T$, we have,
\begin{align*}
    \mathcal{R}^{\pi}(T)&=(\tfrac{1}{2}+\epsilon)T-\E_{\pi}[\sum_{t=1}^{T}\mu_t]\\
                        &\geq (\tfrac{1}{2}+\epsilon)T-(\tfrac{1}{2}+\epsilon)\tfrac{1}{K-1}\sum_{i=1}^{N}\sum_{k_i^*=1}^{K}\E_{k_i^*}^i[N_{k_i^*}^i]-\tfrac{1}{2}(T-\tfrac{1}{K-1}\sum_{i=1}^{N}\sum_{k_i^*=1}^{K}\E_{k_i^*}^i[N_{k_i^*}^i])\\
                        &=\epsilon(T-\tfrac{1}{K-1}\sum_{i=1}^{N}\sum_{k_i^*=1}^{K}\E_{k_i^*}^i[N_{k_i^*}^i])\\
                        &\overset{(a)}{\geq}\epsilon(T-\tfrac{1}{K-1}(\sum_{i=1}^{N}|\mathcal{B}_i|+\frac{|\mathcal{B}_i|}{2}\sqrt{K|\mathcal{B}_i|\log{\tfrac{1}{1-4\epsilon^2}}}))\\
                        &=\epsilon T - \tfrac{\epsilon T}{K-1} -\tfrac{\epsilon T}{2(K-1)}\sqrt{\frac{KT}{N}\log{\tfrac{1}{1-4\epsilon^2}}}\\
                        &\overset{(b)}{\geq}\tfrac{\epsilon T}{2}-\tfrac{\epsilon^2 T}{K-1}\sqrt{\tfrac{KT}{N}\log{\tfrac{4}{3}}}.
\end{align*}
where $(a)$ is due to inequality~(\ref{num_ub}), and $(b)$ holds by $K\geq 3$, $4\epsilon^2\leq\frac{1}{4}$ and $\log{\frac{1}{1-x}}\leq 4\log(\frac{4}{3})x$ for all $x\in[0, \frac{1}{4}]$. We finish the proof by setting $\epsilon=\frac{K-1}{4\sqrt{TK\log{(\frac{4}{3})}}}$.
\end{proof}
\section{Supplementary Materials for Experiments}
\label{sec:sup_exp}
\subsection{Additional Experimental Results}
\label{subsec:add_res}
Reward distributions of base arms along time for three experiments are summarized in Figure~\ref{fig:reward_dist}.
\begin{figure*}[htbp]
    \centering
    \includegraphics[width=0.3\textwidth]{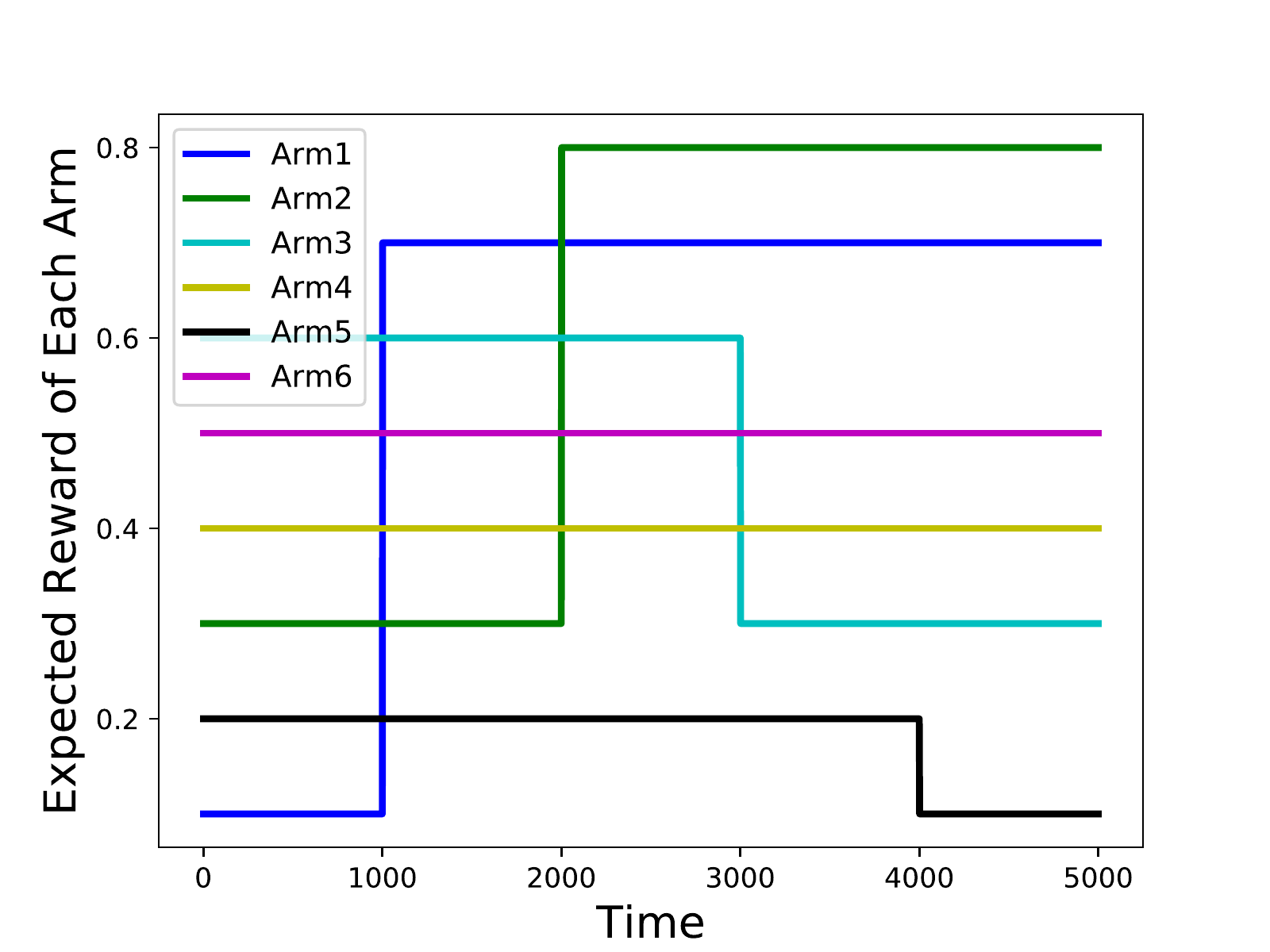}\
    \includegraphics[clip, width=0.3\textwidth]{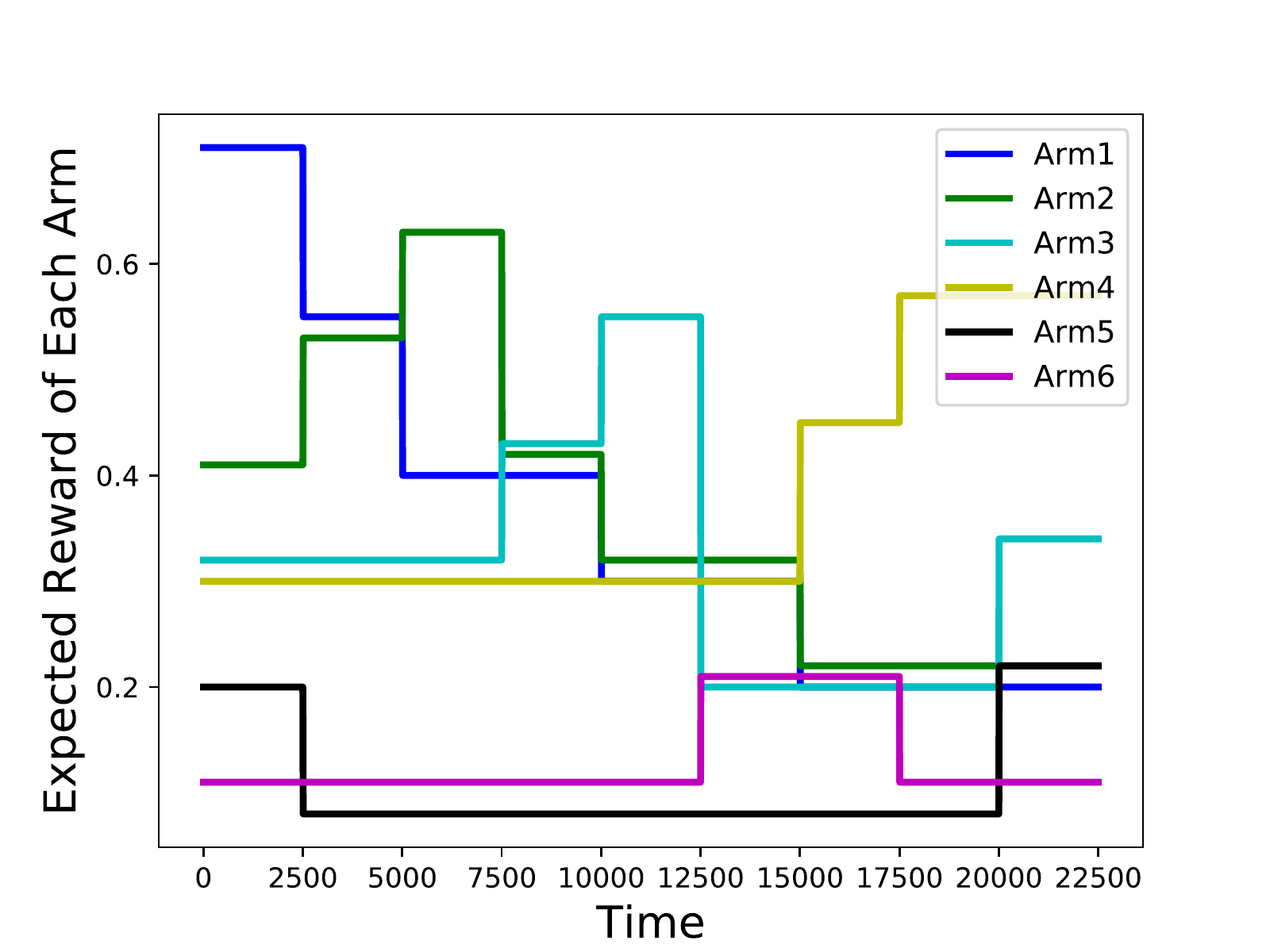}\
    \includegraphics[width=0.3\textwidth]{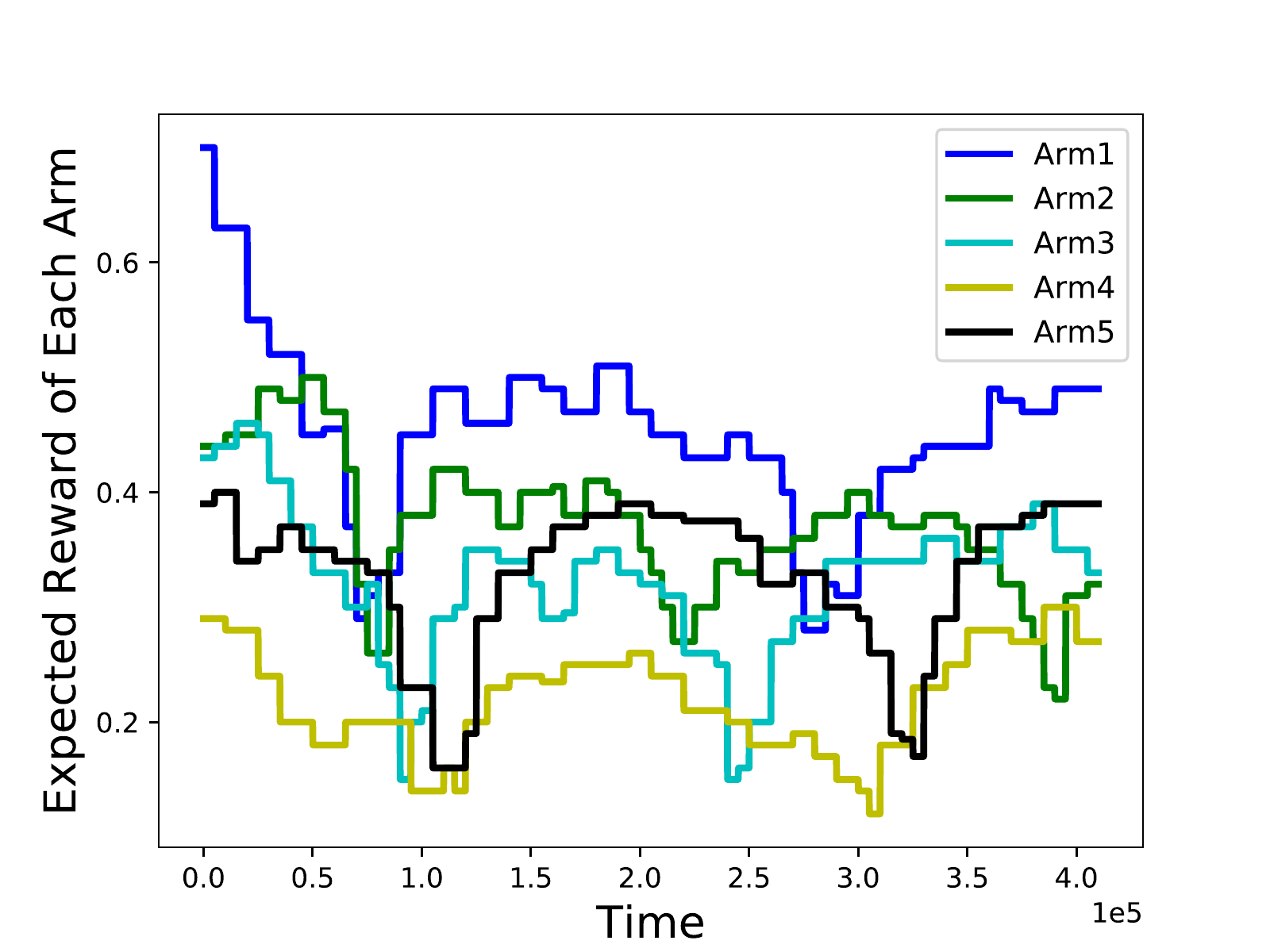}
    \caption{Reward distributions of base arms for all experiments. Left: synthetic data. Middle: Yahoo! experiment 1. Right: Yahoo! experiment 2.}
    \label{fig:reward_dist}
\end{figure*}\\
We also run all algorithms on a much harder problem,  by extracting the experiment data from Yahoo! dataset following~\citet{liu2018change}. Again, to make the experiment nontrivial, we enlarge the click rate of each base arm by $10$ times but the time horizon $T=410000$ is kept. The result is summarized in Figure~\ref{fig:yahoo2}.
\begin{figure}[!ht]
    \centering
    \includegraphics[width=0.7\columnwidth]{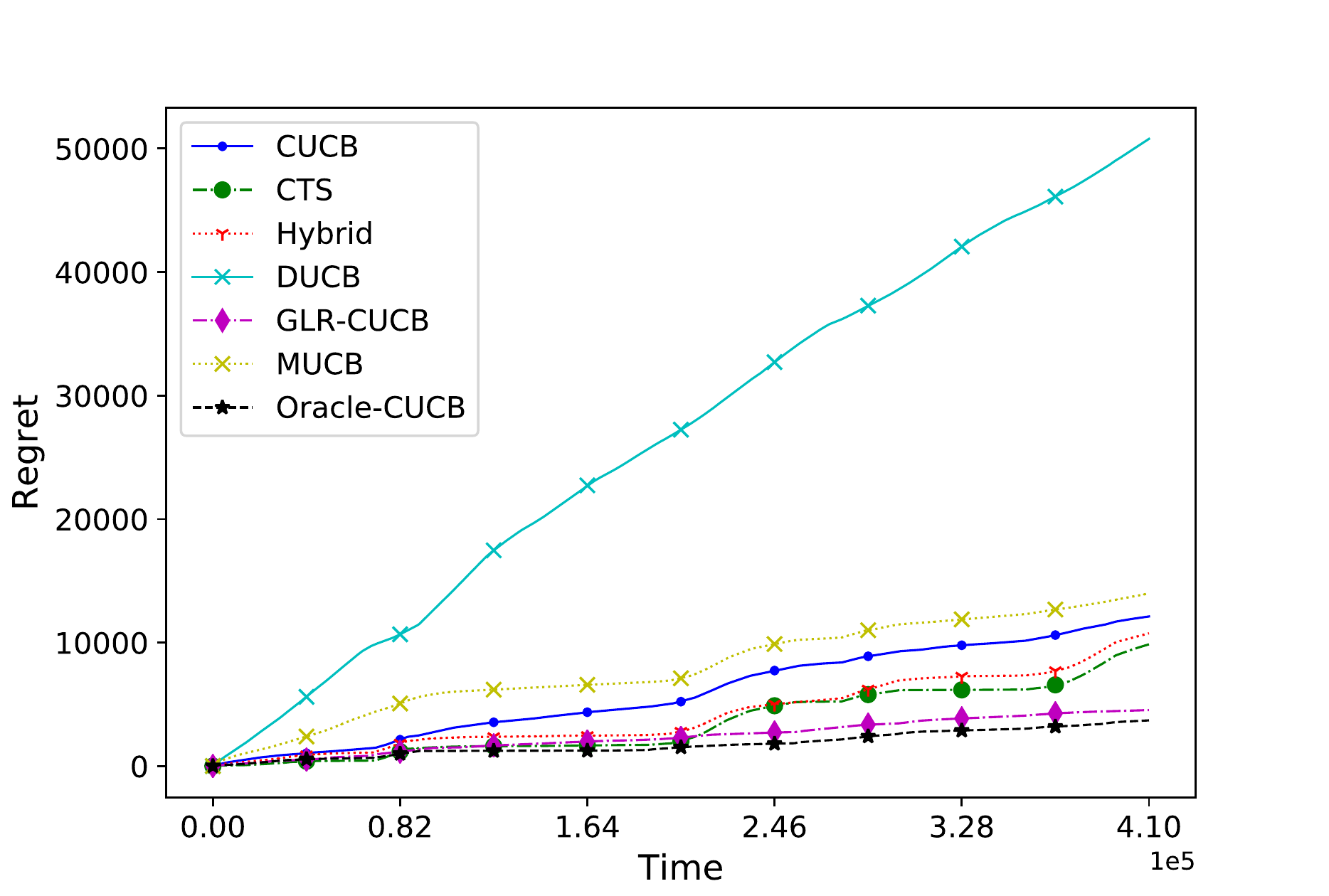}
    \caption{Expected cumulative regret for different algorithms on Yahoo! experiment2.}
    \label{fig:yahoo2}
\end{figure}
Lastly, we provide the standard deviation of the last step of all algorithms for different experiments, as shown in Table~\ref{table:std}.
\begin{table*}[htbp]
\centering
\begin{tabular}{|c||c|c|c|c|c|c|c|c|}

\hline\hline
                   & CUCB  & CTS     & Hybrid & DUCB & GLR-CUCB & LR-GLR-CUCB & MUCB   & Oracle-CUCB \\ \hline
Synthetic Dataset & 241.08 & 351.12 & 278.82 & 14.08 & 37.96 & 73.30 & 171.45 & 25.54       \\ \hline
Yahoo! Experiment 1 & 510.20 & 513.41 & 826.01 & 25.44 & 62.76 & 63.37  & 202.44 & 35.08 \\  \hline
Yahoo! Experiment 2 & 563.54 & 562.24 & 1189.17 & 158.27 & 517.13 & NA & 1427.28 & 160.76     \\ \hline
\end{tabular}
\caption{Standard deviations of all algorithms for experiments on synthetic and Yahoo! datasets}
\label{table:std}
\end{table*}

\subsection{Implementation Details}
\label{subsec:imp_detail}
To guarantee the reproducibility of our experiments, we include our choices of parameters of all algorithms for different experiments. Only \texttt{CUCB}, \texttt{CTS} and \texttt{Oracle-CUCB} do not have any parameters to be tuned.\\
\texttt{Hybrid}: $\gamma$ is set to be 1 for all three experiments. \\
\texttt{DUCB}: $\gamma=1-\sqrt{1/T}/4$ and $\xi=0.5$ for all experiments on synthetic and Yahoo! datasets.\\
\texttt{MUCB}: $w=150,500,1500$ for synthetic experiment, Yahoo! experiment 1 and Yahoo! experiment 2, respectively; $b=\sqrt{\frac{w}{2}\log(2|\mathcal{F}|T^2)}, \gamma = 0.05\sqrt{(N-1)|\mathcal{F}|(2b+3\sqrt{w})/(2T)}$ for all all experiments on synthetic and Yahoo! datasets.\\
\texttt{GLR-CUCB}: $\delta = \frac{20}{T}$ for experiment on the synthetic dataset, $\delta = \frac{50}{T}$ for Yahoo! experiment 1, and $\delta = \frac{70}{T}$ for Yahoo! experiment 2. $p=0.05\sqrt{(N-1)\log{T}/T}$ for all experiments on synthetic and Yahoo! datasets.\\
\end{document}